\documentclass[10pt,final,journal,twocolumn]{IEEEtran}

\usepackage{amsmath,amssymb,color,amsmath, graphicx, float, subcaption, mathrsfs,color} \usepackage[ruled]{algorithm2e} %{latexsym,amssymb,enumerate,multirow,graphicx}
\usepackage{bm}
\usepackage{epstopdf }
%\usepackage[export]{adjustbox}
%\usepackage[margin=1in]{geometry}
%\usepackage{type1cm} % scalable fonts
%\usepackage{lettrine}
%\usepackage[T1]{fontenc}
%\usepackage{selinput}
%\SelectInputMappings{%
%  eacute={é},
%}
\usepackage[font={small}]{caption}
\usepackage[font={footnotesize}]{subcaption}
\usepackage{cite}
\usepackage{amsthm}
\usepackage{url}

\renewcommand{\Bbb}{\mathbb}

%\linespread{1.5}

\theoremstyle{plain}
\newtheorem{theorem}{\textbf{Theorem}}
\newtheorem{definition}{\textbf{Definition}}

\newtheorem{proposition}[theorem]{\textbf{Proposition}}

\newtheorem{observation}{\textbf{Observation}}
%\newtheorem{proof}[theorem]{\textbf{Proof}}

%% Bold English Capital Letters

\def\A{\bm{A}}

\def\D{\bm{D}}

\def\I{\bm{I}}

\def\P{\bm{P}}

\def\U{\bm{U}}
\def\V{\bm{V}}

%% Bold English Small Letters

\def\e{\bm{e}}

\def\v{\bm{v}}

\def\x{\bm{x}}
\def\y{\bm{y}}
\def\z{\bm{z}}

% calligraphic Letters

%% Bold Greek Capital Letters

\def\bGamma{\bm{\Gamma}}

\def\bLambda{\boldsymbol{\Lambda}}

\def\bOmega{\boldsymbol{\Omega}}

%% Bold Greek Small Letters

%% math operators

\def\argmin#1{\underset{#1}{\textrm{argmin}}}

\def\minim#1{\underset{#1}{\textrm{min}}}
\def\Exp{\mathbb{E}}

\def\0{\mathbf{0}}
\def\1{\mathbf{1}}

\newcommand{\tomt}[1]{\textcolor{black}{#1}}
\newcommand{\tomtb}[1]{\textcolor{black}{#1}}

%\SetKw{Kw}{KwEvaluate}
\SetKwInput{kwEvaluate}{Evaluate}
\SetKwInput{kwSort}{Sort}
\SetKwInput{kwInput}{Input}
\SetKwInput{kwOutput}{Output}
\SetKwInput{kwInitialize}{Initialize}
\SetKwInput{kwParameters}{Params.}
\SetKwInput{kwDefaults}{Default init.}

\begin{document}

\title{
Back-Projection based Fidelity Term for Ill-Posed Linear Inverse Problems}
\author{
    Tom~Tirer
    and Raja~Giryes \\

    %\footnote{ Tom Tirer, Tel: +97236408605, Fax: +97236405027, Email: tirer.tom@gmail.com}
    \thanks{This work was supported by the European research council (ERC starting grant 757497 PI Giryes).
The authors are with the School of Electrical Engineering, Tel Aviv University, Tel Aviv 69978, Israel. (email: tirer.tom@gmail.com, raja@tauex.tau.ac.il)}
          }

\maketitle

\begin{abstract}

Ill-posed linear inverse problems appear in many image processing applications, such as deblurring, super-resolution and compressed sensing. Many restoration strategies involve minimizing a cost function, which is composed of fidelity and prior terms, balanced by a regularization parameter. While a vast amount of research has been focused on different prior models, the fidelity term is almost always chosen to be the least squares (LS) objective, that encourages fitting the linearly transformed optimization variable to the observations. In this paper, we examine a different fidelity term, which has been implicitly used by the recently proposed iterative denoising and backward projections (IDBP) framework. This term encourages agreement between the projection of the optimization variable onto the row space of the linear operator and the pseudo-inverse of the linear operator ("back-projection") applied on the observations. We analytically examine the difference between the two fidelity terms for Tikhonov regularization and identify cases (such as a badly conditioned linear operator) where the new term has an advantage over the standard LS one. 
Moreover, we demonstrate empirically that the behavior of the two induced cost functions for sophisticated convex and non-convex priors, such as total-variation, BM3D, and deep generative models, %strongly 
correlates with the obtained theoretical analysis.

\end{abstract}

\begin{IEEEkeywords}
Inverse problems, image restoration, image deblurring, image super-resolution, compressed sensing, total variation, non-convex priors, BM3D, deep generative models.
\end{IEEEkeywords}

\section{Introduction}
\label{sec_int}

Inverse problems appear in many fields of science and engineering, where the goal is to recover a signal from its observations that are obtained by some acquisition process. In image processing, the observations are usually a degraded version of the latent image, which may be noisy, blurred, downsampled, or all together. Such observation models, and others, can be formulated by a linear model
\begin{align}
\label{Eq_general_model}
\y = \A\x + \e,
\end{align}
where $\x \in \Bbb R^n$ represents the unknown original image, $\y \in \Bbb R^m$ represents the observations, $\A$ is an $m \times n$ degradation matrix (sometimes also referred to as the measurement matrix) and $\e \in \Bbb R^m$ is a noise vector.
For example, this model corresponds to the problem of denoising \cite{rudin1992nonlinear, buades2005review, elad2006image, dabov2007image} when $\A$ is the $n \times n$ identity matrix $\I_n$; inpainting \cite{bertalmio2000image, criminisi2004region, elad2005simultaneous} when $\A$ is an $m \times n$ sampling matrix (i.e. a selection of m rows of $\I_n$); deblurring \cite{guerrero2008image, danielyan2012bm3d} when $\A$ is a blur operator; super-resolution \cite{yang2010image, dong2013nonlocally} if $\A$ is a composite operator of blurring (e.g. anti-aliasing filtering) and down-sampling; and compressed sensing when $\A$ is a (random) measurement matrix ($m \ll n$) and the signal is sparse under some basis representation \cite{donoho2006compressed, duarte2008single, candes2008introduction} or resides in a general union of low-dimensional subspaces \cite{blumensath2011sampling, tirer2018generalizing}. 

The inverse problems represented by \eqref{Eq_general_model} are usually ill-posed, i.e. 
the measurements do not suffice for obtaining a successful reconstruction. 
Therefore, a vast amount of research has focused on designing good prior models for natural images. In fact, many of the methods for the problems mentioned above differ only in their prior assumptions and not in the way that they enforce fidelity to the observations.

To be more formal, a common strategy for recovering $\x$ aims at minimizing a cost function of the form
\begin{align}
\label{Eq_cost_func_general}
f(\tilde{\x}) = \ell(\tilde{\x}) + \beta s(\tilde{\x}),
\end{align}
where $\ell(\tilde{\x})$ is a fidelity term, $s(\tilde{\x})$ is a prior term (can be also referred to as the regularizer), $\beta$ is a positive scalar that controls the level of regularization, and $\tilde{\x}$ is the optimization variable. 
Many different prior functions are used in the literature, whether explicitly, e.g.  total-variation (TV) \cite{rudin1992nonlinear}, or implicitly, e.g. BM3D \cite{dabov2007image} and deep generative models \cite{bora2017compressed}. Yet, most of the works use a typical least squares (LS) fidelity term 
\begin{align}
\label{Eq_fidelity_typical}
\ell_{LS}(\tilde{\x}) \triangleq  \frac{1}{2} \| \y-\A\tilde{\x} \|_2^2,
\end{align}
where $\| \cdot \|_2$ stands for the Euclidean norm. 
The frequent usage of this term is probably also motivated by the fact that it can be 
%This term can be also 
derived from the negative log-likelihood function, under the assumption that the noise $\e$ is a vector of i.i.d. Gaussian random variables $e_i \sim \mathcal{N}(0,\sigma_e^2)$. 
However, note that, in general, maximum likelihood estimation has optimality properties only when the number of measurements is {\em much larger} than the number of unknown variables, which is obviously {\em not the case} in ill-posed problems.

In this paper, we examine a different fidelity term, which has been implicitly used by the recently proposed iterative denoising and backward projections (IDBP) framework \cite{tirer2019image} (we elaborate on %this method in Section \ref{sec_idbp_ista}). 
\tomt{this method in the appendix}).
Under the practical assumptions that $m \leq n$ 
and $\mathrm{rank}(\A)=m$, 
we examine the fidelity term
\begin{align}
\label{Eq_fidelity_idbp}
\ell_{BP}(\tilde{\x}) \triangleq  \frac{1}{2} \| \A^\dagger \y - \A^\dagger \A \tilde{\x} \|_2^2,
\end{align}
where $\A^\dagger \triangleq \A^T(\A\A^T)^{-1}$ is the pseudoinverse of the full row-rank matrix $\A$.
Note that $\P_A \triangleq \A^\dagger \A$ is an orthogonal projection onto the row space of $\A$\footnote{\tomt{In row space of $\A$, we mean the subspace spanned by the rows of $\A$.}}, and that $\A^\dagger$ can be interpreted as a ''back-projection'' (BP) from $\A \Bbb R^n$ 
back to $\Bbb R^n$. Therefore, 
the fidelity \eqref{Eq_fidelity_idbp} encourages agreement between $\P_A \tilde{\x}$ ---the projection of 
$\tilde{\x}$ onto the row space of $\A$, and $\A^\dagger \y$ ---the back-projection of the measurements. 
In general, this is different than $\ell_{LS}(\tilde{\x})$  that encourages agreement between $\A\tilde{\x}$ and $\y$.
Note that in the noiseless case, i.e. when $\y=\A\x$, the terms in \eqref{Eq_fidelity_typical} and \eqref{Eq_fidelity_idbp} are  translated to fitting $\A \tilde{\x}$ to $\A \x$ and $\P_A \tilde{\x}$ to $\P_A \x$, respectively.

Note that for some inverse problems $\ell_{LS}(\tilde{\x})$ and $\ell_{BP}(\tilde{\x})$ may coincide. 
For example, in image inpainting, where $\A$ is a selection of $m$ rows of $\I_n$, we have that $\A^\dagger = \A^T$ is an $n \times m$ matrix that merely pads with $n-m$ zeros the vector on which it is applied, %. Therefore 
and so $\| \A^\dagger ( \y - \A\tilde{\x}) \|_2^2 = \| \y-\A\tilde{\x} \|_2^2$.
Therefore, we specifically focus on three popular inverse problems: super-resolution, deblurring and certain compressed sensing scenarios, where the two fidelity terms, $\ell_{LS}(\tilde{\x})$ and $\ell_{BP}(\tilde{\x})$, are indeed very different.

{\bf Contribution.} 
This work makes a first attempt towards characterizing for which observation model $\A$ and prior $s(\tilde{\x})$ it is better to use each of the following objectives:
\begin{align}
\label{Eq_cost_typical}
f_{LS}(\tilde{\x}) &\triangleq \frac{1}{2} \| \y-\A\tilde{\x} \|_2^2 + \beta s(\tilde{\x}), \\
\label{Eq_cost_bp}
f_{BP}(\tilde{\x}) &\triangleq \frac{1}{2} \| \A^\dagger \y - \A^\dagger \A \tilde{\x} \|_2^2 + \beta s(\tilde{\x}).
\end{align}
Particularly, for $s(\tilde{\x})$ being the Tikhonov regularization (the $\ell_2$ prior), where 
closed-form solutions exist, we derive analytical expressions for the estimations' mean square error (MSE) that allow to examine which fidelity term is preferable. For example, we show that in the noiseless case $f_{BP}(\tilde{\x})$ yields provably better restoration than $f_{LS}(\tilde{\x})$ 
\tomtb{
if the condition number of $\A\A^T$ (i.e. the ratio between the largest and smallest squared singular values of $\A$) is large, e.g. in typical super-resolution problems.
}

For sophisticated convex and non-convex priors, such as TV \cite{rudin1992nonlinear}, BM3D \cite{dabov2007image}, and DCGAN \cite{radford2015unsupervised}, analytical analysis is intractable. Therefore, we perform an intensive empirical study, where we use the same optimization method (FISTA \cite{beck2009fast} or ADAM \cite{kingma2014adam}) to minimize each of the two different cost functions. 
Interestingly, we demonstrate that the behavior for the sophisticated priors strongly correlates with properties for which we establish concrete mathematical reasoning in the case of $\ell_2$ priors.

\tomt{
Another contribution of the paper that is deferred to the appendix is showing that IDBP framework \cite{tirer2019image}, which has achieved excellent results for deblurring \cite{tirer2019image, tirer2018icip} and super-resolution \cite{tirer2018super} 
 is in fact the proximal gradient method \cite{beck2009fast, combettes2011proximal} (popularized under the name ISTA) applied on $f_{BP}(\tilde{\x})$. This derivation of IDBP is completely different, and arguably simpler, than the way it is developed in \cite{tirer2019image}.
}

The paper is organized as follows. 
Section \ref{sec_math} includes mathematical analysis of the two cost functions for the case of $\ell_2$-type priors. The analytical results are verified in Section \ref{sec_math_ver}.
In Section \ref{sec_exp} the two cost functions are empirically examined 
for different sophisticated priors. 
Section \ref{sec_conclusion} concludes the paper.

\section{Mathematical Analysis for $\ell_2$ Priors}
\label{sec_math}

In this section, we analyze the performance of the new cost function \eqref{Eq_cost_bp} and compare it to \eqref{Eq_cost_typical} 
for 
a type of $\ell_2$ prior functions, 
for which the closed-form solutions of \eqref{Eq_cost_typical} and \eqref{Eq_cost_bp} lead to a tractable performance analysis.
We start with specifying the required assumptions, then we derive the estimators and expressions for their expected mean square error. Finally, the error expressions are compared and several observations are stated.

\subsection{Assumptions}
\label{sec_math_assump}

In order to allow a concrete mathematical comparison between $f_{BP}(\tilde{\x})$ and $f_{LS}(\tilde{\x})$, in the theoretical analysis we restrict our discussion to $\ell_2$ prior functions of the form $s(\tilde{\x})=\tomt{\frac{1}{2}}\|\D\tilde{\x}\|_2^2=\tomt{\frac{1}{2}}\tilde{\x}^T\D^T\D\tilde{\x}$, where $\D^T\D$ is a positive-definite matrix. This prior is often referred to as Tikhonov regularization and is one of the most widely used methods to solve ill-posed inverse problems.
Yet, for obtaining analytical results, we further focus on a more specific type of this prior---we require that both $\A$ and $\D$ have 
the same right singular vectors. 
Let us define the singular value decomposition (SVD) of the $m \times n$ matrix $\A=\U \bLambda \V^T$, where $\U$ is an $m \times m$ orthogonal matrix whose columns are the left singular vectors, $\bLambda$ is an $m \times n$ {\em rectangular} diagonal matrix with nonzero singular values $\{\lambda_i \}_{i=1}^{m}$ on the diagonal, and $\V$ is an $n \times n$ orthogonal matrix whose columns are the right singular vectors. 
The property that $\{\lambda_i \}_{i=1}^{m}$ are strictly positive 
follows from our assumptions in Section \ref{sec_int}, that $m \leq n$ and $\mathrm{rank}(\A)=m$. 
%Therefore, 
For $\D$, essentially, we assume that $\D^T\D=\V \bGamma^2 \V^T \succ 0$, where $\bGamma^2$ is an $n \times n$ diagonal matrix of nonzero eigenvalues $\{\gamma_i^2 \}_{i=1}^{n}$.

The assumption above is required because, as far as we know, currently there is no known analytical expression for the eigen-decomposition of arbitrary matrices $\A^T\A + \D^T\D$ which is required for our analysis \cite{horn2012matrix}. 
Yet, this assumption holds in some practical cases, e.g. if $\A$ and $\D$ are circulant matrices (and thus diagonalized by the discrete Fourier transform), or if $\D=\I_n$ (i.e. least-norm regularization).

\subsection{Performance analysis}
\label{sec_math_perf}

Let us start with obtaining closed-form expressions for the estimators $\hat{\x}_{LS}$ and $\hat{\x}_{BP}$, which minimize $f_{LS}(\tilde{\x})$ and $f_{BP}(\tilde{\x})$, respectively. Due to the convexity of the cost functions, this is done simply by equating their gradients to zero
\begin{align}
\label{Eq_cost_typical_est}
\nabla f_{LS}(\tilde{\x}) &= - \A^T ( \y - \A \tilde{\x} ) + \beta \D^T\D \tilde{\x} = \0  \nonumber \\
& \Rightarrow \hat{\x}_{LS}=(\A^T\A + \beta \D^T\D)^{-1}\A^T \y,
\end{align}
\begin{align}
\label{Eq_cost_idbp_est}
\nabla f_{BP}(\tilde{\x}) &= - \P_A ( \A^\dagger \y - \P_A \tilde{\x} ) + \beta \D^T\D \tilde{\x} = \0  \nonumber \\
& \Rightarrow \hat{\x}_{BP}=(\P_A + \beta \D^T\D)^{-1} \A^\dagger \y.
\end{align}
\tomt{
In \eqref{Eq_cost_idbp_est} we use the properties $\P_A \triangleq \A^\dagger \A = \P_A^T=\P_A^2$ and $\P_A \A^\dagger = \A^\dagger$. 
}
We turn to compute the expected mean square errors (MSEs) of the estimators, conditioned on $\x$, under the assumptions that $\Exp[\e]=\0$ and $\Exp[\e\e^T]=\sigma_e^2\I_m$.
To ease formulations, we define the $n-m$ zero eigenvalues of $\A^T\A$ (i.e. zeros in the diagonal of $\bLambda^T\bLambda$) by $\{\lambda_i^2 \}_{i=m+1}^{n}$.

The computation of the MSE of $\hat{\x}_{LS}$ is given by
\begin{align}
\label{Eq_cost_typical_mse}
& MSE_{LS} = \Exp\|\hat{\x}_{LS} - \x \|_2^2 \nonumber \\
& \hspace{0pt} = \Exp \left \|(\A^T\A + \beta \D^T\D)^{-1}\A^T (\A\x+\e) - \x \right \|_2^2  \nonumber \\
& \hspace{0pt} = \tomt{ \left \|\left ( (\A^T\A + \beta \D^T\D)^{-1}\A^T\A - \I_n \right )\x \right \|_2^2 } \nonumber\\
&\hspace{10pt}  \tomt{ + 2 \Exp \left [ \e \right ]^T \A (\A^T\A + \beta \D^T\D)^{-2} \A^T \A \x }  \nonumber\\
&\hspace{10pt}  \tomt{ - 2 \Exp \left [ \e \right ]^T \A (\A^T\A + \beta \D^T\D)^{-1} \x }  \nonumber\\
&\hspace{10pt}  \tomt{ + \Exp \left [ \e^T \A (\A^T\A + \beta \D^T\D)^{-2} \A^T \e \right ] }  \nonumber\\
& \hspace{0pt} = \tomt{ \left \|\left ( (\A^T\A + \beta \D^T\D)^{-1}\A^T\A - \I_n \right )\x \right \|_2^2} \nonumber\\
&\hspace{10pt}  \tomt{ + \mathrm{Tr} \left ( (\A^T\A + \beta \D^T\D)^{-2} \A^T \Exp \left [ \e \e^T \right ] \A \right ) } \nonumber\\
& \hspace{0pt} = \left \|\left ( (\A^T\A + \beta \D^T\D)^{-1}\A^T\A - \I_n \right )\x \right \|_2^2 \nonumber\\
&\hspace{10pt}  + \sigma_e^2 \mathrm{Tr} \left ( (\A^T\A + \beta \D^T\D)^{-2} \A^T\A \right )  \nonumber\\
& \hspace{0pt} = \left \| \V \left ( (\bLambda^T\bLambda + \beta \bGamma^2)^{-1}\bLambda^T\bLambda - \I_n \right ) \V^T \x \right \|_2^2 \nonumber\\
&\hspace{10pt}  + \sigma_e^2 \mathrm{Tr} \left ( \V (\bLambda^T\bLambda + \beta \bGamma^2)^{-2} \bLambda^T\bLambda \V^T \right ) \nonumber\\
& \hspace{0pt} = \sum \limits_{i=1}^{n} \Big ( \frac{\lambda_i^2}{\lambda_i^2+\beta \gamma_i^2} - 1 \Big )^2 [\V^T\x]_i^2 + \sigma_e^2 \sum \limits_{i=1}^{n} \frac{\lambda_i^2}{(\lambda_i^2+\beta \gamma_i^2)^2}.
\end{align}
The second equality follows from substituting \eqref{Eq_general_model} in \eqref{Eq_cost_typical_est}, \tomt{the fourth equality uses $\Exp \left [ \e \right ]=\0$ and the cyclic property of trace, the fifth equality uses $\Exp \left [ \e\e^T \right ]=\sigma_e^2\I_m$, the sixth} equality is obtained by substituting the eigen-decompositions of $\A^T\A$ and $\D^T\D$, and the last equality follows from the fact that $\V$ is an orthogonal matrix.
Therefore, 
by defining the (squared) bias and variance terms as
\begin{align}
\label{Eq_cost_typical_bias_var}
bias_{LS}^2 &\triangleq  \sum \limits_{i=1}^{m} \underbrace{ \Big ( \frac{\beta \gamma_i^2}{\lambda_i^2+\beta \gamma_i^2} \Big )^2 [\V^T\x]_i^2 }_{\triangleq bias_{LS}^{2(i)}}  +  \sum \limits_{i=m+1}^{n} [\V^T\x]_i^2 , \nonumber \\
var_{LS} &\triangleq \sum \limits_{i=1}^{m} \underbrace{ \frac{\sigma_e^2}{\lambda_i^2(1+\beta \gamma_i^2/\lambda_i^2)^2} }_{\triangleq var_{LS}^{(i)}},
\end{align}
we may write the error as 
\begin{align}
\label{Eq_cost_typical_mse2}
MSE_{LS} = bias_{LS}^2 + var_{LS}.
\end{align}
Note that the bias depends on the original image $\x$ and not on the noise, and the opposite holds for the variance. Yet, both terms are affected by the structure of $\A$.
The regularization parameters $\beta, \{\gamma_i\}$ introduce a tradeoff: increasing them reduces the variance but increases the bias.

To ease the computation of 
the MSE of $\hat{\x}_{BP}$, let us also define an indicator function $1_{i \leq m}$ that is equal to 1 if $i \leq m$ and 0 otherwise, and an $n \times n$ diagonal matrix $\I_{i \leq m}$ with $\{1_{i \leq m}\}_{i=1}^n$ on its diagonal.
The following identities are used
\begin{align}
\label{Eq_cost_idbp_identities}
\P_A &= \V \I_{i \leq m} \V^T, \nonumber \\
\A^\dagger &= \V \bLambda^T (\bLambda\bLambda^T)^{-1} \U^T,  \nonumber \\
\A^\dagger \A^{\dagger T} &= \V \bLambda^T (\bLambda\bLambda^T)^{-2} \bLambda \V^T.
\end{align}
Now, we get
\begin{align}
\label{Eq_cost_idbp_mse}
 & MSE_{BP}  = \Exp\|\hat{\x}_{BP} - \x \|_2^2 \nonumber \\
& = \Exp \left \|(\P_A + \beta \D^T\D)^{-1}\A^\dagger (\A\x+\e) - \x \right \|_2^2  \nonumber \\
& \hspace{0pt} = \tomt{ \left \|\left ( (\P_A + \beta \D^T\D)^{-1}\P_A - \I_n \right )\x \right \|_2^2 } \nonumber\\
&\hspace{10pt}  \tomt{ + 2 \Exp \left [ \e \right ]^T \A^{\dagger T} (\P_A + \beta \D^T\D)^{-2} \P_A \x }  \nonumber\\
&\hspace{10pt}  \tomt{ - 2 \Exp \left [ \e \right ]^T \A^{\dagger T} (\P_A + \beta \D^T\D)^{-1} \x }  \nonumber\\
&\hspace{10pt}  \tomt{ + \Exp \left [ \e^T  \A^{\dagger T} (\P_A + \beta \D^T\D)^{-2}  \A^{\dagger} \e \right ] }  \nonumber\\
& = \tomt{ \left \|\left ( (\P_A + \beta \D^T\D)^{-1}\P_A - \I_n \right )\x \right \|_2^2 } \nonumber\\
&\hspace{10pt}  \tomt{ +  \mathrm{Tr} \left ( (\P_A + \beta \D^T\D)^{-2} \A^\dagger \Exp \left [ \e \e^T \right ] \A^{\dagger T} \right ) } \nonumber\\
& = \left \|\left ( (\P_A + \beta \D^T\D)^{-1}\P_A - \I_n \right )\x \right \|_2^2 \nonumber\\
&\hspace{10pt}  + \sigma_e^2 \mathrm{Tr} \left ( (\P_A + \beta \D^T\D)^{-2} \A^\dagger \A^{\dagger T} \right )  \nonumber\\
&  = \left \| \V \left ( (\I_{i \leq m} + \beta \bGamma^2)^{-1}\I_{i \leq m} - \I_n \right ) \V^T \x \right \|_2^2 \nonumber\\
&\hspace{10pt}  + \sigma_e^2 \mathrm{Tr} \left ( \V (\I_{i \leq m} + \beta \bGamma^2)^{-2} \bLambda^T (\bLambda\bLambda^T)^{-2} \bLambda \V^T \right ) \nonumber\\
&  = \sum \limits_{i=1}^{n} \Big ( \frac{1_{i \leq m}}{1_{i \leq m}+\beta \gamma_i^2} - 1 \Big )^2 [\V^T\x]_i^2 + \sigma_e^2 \sum \limits_{i=1}^{n} \frac{\lambda_i^{-2}1_{i \leq m}}{(1_{i \leq m}+\beta \gamma_i^2)^2}.
\end{align}
The second equality follows from substituting \eqref{Eq_general_model} in \eqref{Eq_cost_idbp_est}, \tomt{the fourth equality uses $\Exp \left [ \e \right ]=\0$ and the cyclic property of trace, the fifth equality uses $\Exp \left [ \e\e^T \right ]=\sigma_e^2\I_m$, the sixth} equality is obtained by substituting the eigen-decompositions of $\P_A$, $\D^T\D$ and $\A^\dagger \A^{\dagger T}$, and the last equality uses the orthogonality of $\V$.
Therefore, 
by defining 
\begin{align}
\label{Eq_cost_idbp_bias_var}
bias_{BP}^2 &\triangleq  \sum \limits_{i=1}^{m} \underbrace{ \Big ( \frac{\beta \gamma_i^2}{1+\beta \gamma_i^2} \Big )^2 [\V^T\x]_i^2 }_{\triangleq bias_{BP}^{2(i)}} +  \sum \limits_{i=m+1}^{n} [\V^T\x]_i^2 , \nonumber \\
var_{BP} &\triangleq \sum \limits_{i=1}^{m} \underbrace{ \frac{\sigma_e^2}{\lambda_i^2(1+\beta \gamma_i^2)^2} }_{\triangleq var_{BP}^{(i)}},
\end{align}
we have that 
\begin{align}
\label{Eq_cost_bp_mse2}
MSE_{BP} = bias_{BP}^2 + var_{BP}.
\end{align}

Comparing  \eqref{Eq_cost_typical_bias_var} and \eqref{Eq_cost_idbp_bias_var} 
we may notice the following.
First, the term $bias_{BP}^2$ handles small $\{\lambda_i \}_{i=1}^{m}$ (i.e. singular values of $\A$ that are smaller than 1) better than $bias_{LS}^2$. However, $var_{BP}$ handles such small singular values worse than $var_{LS}$.
The opposite holds for singular values that are greater than 1.
This behavior can be formulated as the following observation.
\begin{observation}
\label{observ1}
For $\lambda_i<1$ we have that $bias_{BP}^{2(i)} < bias_{LS}^{2(i)}$ but $var_{BP}^{(i)} > var_{LS}^{(i)}$.
And, for $\lambda_i>1$ we have that $bias_{BP}^{2(i)} > bias_{LS}^{2(i)}$ but $var_{BP}^{(i)} < var_{LS}^{(i)}$.
\end{observation}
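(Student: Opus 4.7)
The observation compares, index by index, the expressions in \eqref{Eq_cost_typical_bias_var} and \eqref{Eq_cost_idbp_bias_var}. My plan is to treat the bias and variance parts separately, and in each case reduce the comparison to a single elementary inequality in $\lambda_i$ (using $\beta>0$ and $\gamma_i^2>0$ from the assumptions).

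For the bias, I would observe that both $bias_{LS}^{2(i)}$ and $bias_{BP}^{2(i)}$ have the form $(\beta\gamma_i^2/D)^2[\V^T\x]_i^2$ with the same positive numerator $\beta\gamma_i^2$, the same nonnegative multiplier $[\V^T\x]_i^2$, and denominators $D_{LS}=\lambda_i^2+\beta\gamma_i^2$ and $D_{BP}=1+\beta\gamma_i^2$, both positive. Since $x\mapsto 1/x$ is strictly decreasing on $(0,\infty)$ and both fractions are positive, the comparison reduces to comparing $D_{LS}$ and $D_{BP}$, and
\begin{equation*}
D_{LS}<D_{BP}\iff \lambda_i^2<1,\qquad D_{LS}>D_{BP}\iff \lambda_i^2>1 .
\end{equation*}
When $[\V^T\x]_i^2>0$ this gives a strict inequality (and when it vanishes both sides are zero, so the stated strict inequality should be understood in the standard way, or rephrased as ``$\le$'' with equality only in this degenerate case).

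For the variance, I would rewrite $var_{LS}^{(i)}=\sigma_e^2\lambda_i^2/(\lambda_i^2+\beta\gamma_i^2)^2$ and $var_{BP}^{(i)}=\sigma_e^2/[\lambda_i^2(1+\beta\gamma_i^2)^2]$ (both positive) and form the ratio
\begin{equation*}
\frac{var_{BP}^{(i)}}{var_{LS}^{(i)}}=\left(\frac{\lambda_i^2+\beta\gamma_i^2}{\lambda_i^2(1+\beta\gamma_i^2)}\right)^{\!2}=\left(\frac{\lambda_i^2+\beta\gamma_i^2}{\lambda_i^2+\lambda_i^2\beta\gamma_i^2}\right)^{\!2}.
\end{equation*}
The numerator and denominator of the inner fraction differ exactly by $\beta\gamma_i^2(1-\lambda_i^2)$, which is positive iff $\lambda_i<1$ and negative iff $\lambda_i>1$; since both are positive, the inner fraction exceeds $1$ iff $\lambda_i<1$ and is less than $1$ iff $\lambda_i>1$. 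Squaring preserves the inequality because both sides are positive, yielding $var_{BP}^{(i)}>var_{LS}^{(i)}$ when $\lambda_i<1$ and $var_{BP}^{(i)}<var_{LS}^{(i)}$ when $\lambda_i>1$.

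There is essentially no obstacle here; the only care that must be taken is to keep track of the sign of $\beta\gamma_i^2(1-\lambda_i^2)$ and to note that none of the denominators can vanish under the standing assumptions ($\gamma_i^2>0$ for all $i$ and $\lambda_i>0$ for $i\le m$), so that all manipulations of ratios are legitimate.
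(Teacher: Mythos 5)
Your argument is correct and is exactly the comparison the paper intends: Observation~\ref{observ1} is stated without an explicit proof, as an immediate index-by-index comparison of the denominators $\lambda_i^2+\beta\gamma_i^2$ versus $1+\beta\gamma_i^2$ in \eqref{Eq_cost_typical_bias_var} and \eqref{Eq_cost_idbp_bias_var}, which is what you carry out (your ratio computation for the variances is just a slightly more formal version of the same elementary step). Your side remark about the degenerate case $[\V^T\x]_i=0$ (and, symmetrically, $\sigma_e=0$ for the variances, where strict inequality also degenerates to equality) is a fair, minor sharpening that the paper leaves implicit.
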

Notice that in the noiseless case $\sigma_e=0$, implying that $MSE_{LS} = bias_{LS}^2$ and $MSE_{BP} = bias_{BP}^2$.
This leads us to the following observation for the noiseless case. 

\begin{observation}
\label{observ2}
In a noiseless scenario, the relation between $\sum \limits_{i=1}^{m} bias_{BP}^{2(i)}$ and $\sum \limits_{i=1}^{m} bias_{LS}^{2(i)}$, dictates the relation between $MSE_{BP}$ and $MSE_{LS}$.
In particular, if all the singular values of $\A$ are smaller than 1, then $MSE_{BP} < MSE_{LS}$, and if all the singular values of $\A$ are greater than 1, then $MSE_{BP} > MSE_{LS}$.
\end{observation}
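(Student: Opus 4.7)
\textbf{Proof plan for Observation \ref{observ2}.} The statement is essentially a direct corollary of the decomposition already derived for $MSE_{LS}$ and $MSE_{BP}$ together with the pointwise comparison in Observation \ref{observ1}. My plan is to carry it out in three short steps.

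First, I would specialize the MSE expressions \eqref{Eq_cost_typical_mse2} and \eqref{Eq_cost_bp_mse2} to $\sigma_e=0$. In that case the variance contributions $var_{LS}$ and $var_{BP}$ in \eqref{Eq_cost_typical_bias_var} and \eqref{Eq_cost_idbp_bias_var} vanish identically, so that $MSE_{LS}=bias_{LS}^2$ and $MSE_{BP}=bias_{BP}^2$. This reduces the question entirely to comparing two sums of squared biases.

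Second, I would observe that the ``tail'' contribution $\sum_{i=m+1}^{n}[\V^T\x]_i^2$ appears identically in both $bias_{LS}^2$ and $bias_{BP}^2$. This is the component of $\x$ lying in the null space of $\A$, which neither estimator can possibly recover (since both multiply $\y$ by matrices whose range is contained in the row space of $\A$), and therefore it contributes the same amount to either MSE. Subtracting these equal tails yields
\begin{align}
MSE_{BP}-MSE_{LS} \;=\; \sum_{i=1}^{m}\bigl(bias_{BP}^{2(i)}-bias_{LS}^{2(i)}\bigr),
\end{align}
which establishes the first (``dictates'') part of the observation.

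Third, to obtain the sign conditions, I would invoke Observation \ref{observ1} termwise on the sum just displayed. If every $\lambda_i<1$ for $i=1,\dots,m$, then each summand is negative, hence $MSE_{BP}<MSE_{LS}$; if every $\lambda_i>1$, each summand is positive and the opposite inequality holds. There is no real obstacle here: the whole argument is a bookkeeping consequence of the closed-form biases already computed and the pointwise inequality from Observation \ref{observ1}. The only place one should be slightly careful is to note explicitly that the null-space tails cancel, so that the comparison genuinely reduces to the first $m$ indices where Observation \ref{observ1} applies.
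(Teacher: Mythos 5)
Your argument is exactly the reasoning the paper relies on (it states Observation 2 without a separate proof, as an immediate consequence of the noiseless reduction $MSE=bias^2$, the cancellation of the common null-space tail $\sum_{i=m+1}^{n}[\V^T\x]_i^2$, and the termwise comparison in Observation 1), so the proposal is correct and follows the same route. The only cosmetic caveat, shared with the paper's own statement of Observation 1, is that the strict inequalities implicitly require $[\V^T\x]_i\neq 0$ for at least one index $i\leq m$; otherwise the summands vanish and one only gets equality.
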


%\tomtb{
Note that Observation \ref{observ2} holds for any given setting of $\beta$ that is used by the two estimators. 
Therefore, these relations between $MSE_{BP}$ and $MSE_{LS}$ hold also when $\beta$ is tuned for best performance of each estimator.
%}

\tomtb{
In practice, a different value of $\beta$ can be preferred for the different cost functions. Let us denote by $\beta_{LS}$ and $\beta_{BP}$ the regularization parameter in $\ell_{LS}(\tilde{\x})$ and $\ell_{BP}(\tilde{\x})$, respectively, and
let the singular values of $\A$ be indexed in a descending order, i.e. $\lambda_1 \geq \ldots \geq \lambda_m$.
Comparing $MSE_{BP}$ and $MSE_{LS}$ with $\beta_{BP} \neq \beta_{LS}$ leads to an additional observation for the noiseless case, which is in favor of the BP cost. 
}

\begin{observation}
\tomtb{
\label{observ3}
In a noiseless scenario, for any $\beta_{LS}$ and $\beta_{BP}=\beta_{LS}/\lambda_1^2$, we have that $MSE_{BP} \leq MSE_{LS}$. If in addition $[\V^T\x]_{{i}} \neq 0$ for some indices $2 \leq {{i}} \leq m$, then $MSE_{BP} < MSE_{LS}$ unless $\lambda_{{i}} = \lambda_1$ for all these indices.
}
\end{observation}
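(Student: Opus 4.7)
The plan is to reduce the claim to a termwise comparison of the bias expressions derived in \eqref{Eq_cost_typical_bias_var} and \eqref{Eq_cost_idbp_bias_var}, exploiting the fact that in the noiseless case $\sigma_e=0$ makes the variance terms vanish, so that $MSE_{LS}=bias_{LS}^2$ and $MSE_{BP}=bias_{BP}^2$.

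First, I would substitute $\beta_{BP}=\beta_{LS}/\lambda_1^2$ into the per-index BP bias
\begin{align*}
bias_{BP}^{2(i)}
= \Big ( \frac{\beta_{BP}\gamma_i^2}{1+\beta_{BP}\gamma_i^2} \Big )^2 [\V^T\x]_i^2
= \Big ( \frac{\beta_{LS}\gamma_i^2}{\lambda_1^2+\beta_{LS}\gamma_i^2} \Big )^2 [\V^T\x]_i^2,
\end{align*}
which puts it into direct algebraic correspondence with
\begin{align*}
bias_{LS}^{2(i)}=\Big ( \frac{\beta_{LS}\gamma_i^2}{\lambda_i^2+\beta_{LS}\gamma_i^2} \Big )^2 [\V^T\x]_i^2.
\end{align*}
Both fractions have the same nonnegative numerator $\beta_{LS}\gamma_i^2$, and the denominators differ only in the additive term $\lambda_1^2$ versus $\lambda_i^2$.

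Next, I would use the ordering $\lambda_1\geq\lambda_i$ for all $1\leq i\leq m$ to conclude $\lambda_1^2+\beta_{LS}\gamma_i^2\geq\lambda_i^2+\beta_{LS}\gamma_i^2$, hence $bias_{BP}^{2(i)}\leq bias_{LS}^{2(i)}$ for every $i\in\{1,\dots,m\}$. Since the "tail" contributions $\sum_{i=m+1}^{n}[\V^T\x]_i^2$ in \eqref{Eq_cost_typical_bias_var} and \eqref{Eq_cost_idbp_bias_var} are identical, summing these inequalities yields $bias_{BP}^2\leq bias_{LS}^2$, which gives the weak inequality $MSE_{BP}\leq MSE_{LS}$ claimed in the first part of the statement.

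For the strict inequality, I would identify precisely when the termwise inequality above is strict: since the $i=1$ term always gives equality (as $\lambda_i=\lambda_1$), the strict inequality $bias_{BP}^{2(i)}<bias_{LS}^{2(i)}$ holds exactly for indices $i\in\{2,\dots,m\}$ with $[\V^T\x]_i\neq 0$ \emph{and} $\lambda_i<\lambda_1$ (as $\gamma_i^2>0$ by assumption on $\D$, and $\beta_{LS}>0$ makes the numerator nonzero). Therefore, if $[\V^T\x]_{{i}}\neq 0$ for some ${{i}}\in\{2,\dots,m\}$, at least one such term contributes strict inequality unless \emph{all} such indices satisfy $\lambda_{{i}}=\lambda_1$, in which case all contributions collapse to equalities. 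This gives the second part of the observation. The argument is essentially a one-parameter monotonicity comparison; the only subtlety is bookkeeping the equality case carefully, which is why the observation needs the qualifier "unless $\lambda_{{i}}=\lambda_1$ for all these indices."
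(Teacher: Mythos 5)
Your proposal is correct and follows essentially the same route as the paper: substituting $\beta_{BP}=\beta_{LS}/\lambda_1^2$ to rewrite $bias_{BP}^{2(i)}$ with denominator $\lambda_1^2+\beta_{LS}\gamma_i^2$, comparing termwise via $\lambda_1\geq\lambda_i$, and invoking the noiseless reduction of the MSEs to the bias sums. Your treatment of the strict-inequality case (tracking $\gamma_i^2>0$, $\beta_{LS}>0$, and the identical tail terms) is just a more explicit bookkeeping of what the paper states briefly.
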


\begin{proof}
\tomtb{
Since $\beta_{BP}=\beta_{LS}/\lambda_1^2$, we have that $\frac{\beta_{BP} \gamma_i^2}{1+\beta_{BP} \gamma_i^2} = \frac{\beta_{LS} \gamma_i^2}{\lambda_1^2+\beta_{LS} \gamma_i^2}$. Therefore,
\begin{align}
\label{Eq_observ3}
\sum \limits_{i=1}^{m} bias_{BP}^{2(i)} &= \sum \limits_{i=1}^{m}  \Big ( \frac{\beta_{BP} \gamma_i^2}{1+\beta_{BP} \gamma_i^2} \Big )^2 [\V^T\x]_i^2 \nonumber \\
&= \sum \limits_{i=1}^{m}  \Big ( \frac{\beta_{LS} \gamma_i^2}{\lambda_1^2+\beta_{LS} \gamma_i^2} \Big )^2 [\V^T\x]_i^2 \nonumber \\
& \leq \sum \limits_{i=1}^{m}  \Big ( \frac{\beta_{LS} \gamma_i^2}{\lambda_i^2+\beta_{LS} \gamma_i^2} \Big )^2 [\V^T\x]_i^2 = \sum \limits_{i=1}^{m} bias_{LS}^{2(i)}.
\end{align}
If $[\V^T\x]_i \neq 0$ for some indices $2 \leq i \leq m$, it is easy to see that the inequality is strict unless $\lambda_{{i}} = \lambda_1$ for these indices. 
Finally, recall that in the noiseless case the relation between $\sum \limits_{i=1}^{m} bias_{BP}^{2(i)}$ and $\sum \limits_{i=1}^{m} bias_{LS}^{2(i)}$, dictates the relation between $MSE_{BP}$ and $MSE_{LS}$.
}
\end{proof}

\tomtb{
Even though Observation~\ref{observ2} and Observation~\ref{observ3} consider the noiseless case, note that they cover events where the gap between $bias_{LS}^2$ and $bias_{BP}^2$ may be substantial enough to dictate the relationship between the MSEs also 
when the noise level is moderate.
For example, 
if $\beta_{BP}=\beta_{LS}$ and all the singular values are much smaller than 1 then the 'in particular'-part in Observation \ref{observ2} implies that $\sum \limits_{i=1}^{m} bias_{BP}^{2(i)}$ is much smaller than $\sum \limits_{i=1}^{m} bias_{LS}^{2(i)}$.
Another example, if $\beta_{BP}=\beta_{LS}/\lambda_1^2$ and the condition number of $\A\A^T$, i.e. the ratio $\lambda_1^2/\lambda_m^2$, is very large, then Observation \ref{observ3} implies that $\sum \limits_{i=1}^{m} bias_{BP}^{2(i)}$ is much smaller than $\sum \limits_{i=1}^{m} bias_{LS}^{2(i)}$.
}

\subsection{Discussion and implications for priors beyond $\ell_2$}
\label{sec_math_beyond}

As can be seen in \eqref{Eq_cost_typical_bias_var} and \eqref{Eq_cost_idbp_bias_var}, for the discussed Tikhonov regularization the bias term of each estimator is minimized if $\beta \to 0$, and in this case $bias_{LS}^2$ tends to $bias_{BP}^2$. This means that the performance gap in the noiseless case, which is stated in Observation~\ref{observ2} \tomtb{and Observation~\ref{observ3}}, tends to zero for $\beta \to 0$.
However, note that we consider here $\ell_2$ priors mainly as a surrogate to complex priors which are hard to analyze.
As we demonstrate in Section \ref{sec_exp}, the results that are obtained for sophisticated priors, such as TV, BM3D and DCGAN, indeed strongly correlate with the observations above 
\tomtb{ 
(especially with Observation~\ref{observ3} that implies 
an advantage of BP for badly conditioned $\A\A^T$). 
}
For such priors, the optimal value of $\beta$ for each fidelity term is significantly above 0 even in the noiseless case (contrary to $\ell_2$ priors), and the gap between the best recoveries is significant as well.

\tomtb{
Another motivation for connecting the above 
analysis 
to other priors comes from recognizing attributes that distinguish between the LS and BP fidelity terms regardless of the prior used with them.
}
\tomt{
Let us focus on the noiseless case, where $\y=\A\x$. In this case, \eqref{Eq_cost_typical} and \eqref{Eq_cost_bp} can be written as
\begin{align}
\label{Eq_cost_typical_noiseless}
f_{LS}(\tilde{\x}) &= \frac{1}{2} \| \A\x-\A\tilde{\x} \|_2^2 + \beta s(\tilde{\x})  \nonumber\\
&= \frac{1}{2} (\x-\tilde{\x})^T \A^T\A (\x-\tilde{\x}) + \beta s(\tilde{\x}), \\
\label{Eq_cost_bp_noiseless}
f_{BP}(\tilde{\x}) &= \frac{1}{2} \| \A^\dagger \A\x - \A^\dagger \A \tilde{\x} \|_2^2 + \beta s(\tilde{\x}) \nonumber\\
&=\frac{1}{2} (\x-\tilde{\x})^T \P_A (\x-\tilde{\x}) + \beta s(\tilde{\x}).
\end{align}
Under our SVD notations, we have $\A^T\A=\sum \limits_{i=1}^{m} \lambda_i^2 \v_i\v_i^T$ and $\P_A=\sum \limits_{i=1}^{m} \v_i\v_i^T$, where $\v_i$ is the right singular vector of $\A$ associated with the singular value $\lambda_i$. 
Therefore, we get
\begin{align}
\label{Eq_cost_typical_noiseless2}
f_{LS}(\tilde{\x}) &= \frac{1}{2} \sum \limits_{i=1}^{m} \lambda_i^2 | \v_i^T (\x-\tilde{\x})|^2 + \beta s(\tilde{\x}),  \\
\label{Eq_cost_bp_noiseless2}
f_{BP}(\tilde{\x}) &= \frac{1}{2} \sum \limits_{i=1}^{m} | \v_i^T (\x-\tilde{\x})|^2 + \beta s(\tilde{\x}).
\end{align}
}
\tomtb{
Note that $f_{BP}(\tilde{\x})$ equally weighs all $\{ | \v_i^T (\x-\tilde{\x})|^2 \}_{i=1}^m$,
contrary to $f_{LS}(\tilde{\x})$ that weighs them according to $\{  \lambda_i^2 \}$.
As in inverse problems one (typically) cares about minimizing the MSE, an {\em intuition} that minimizing \eqref{Eq_cost_bp_noiseless2} may have an advantage over minimizing \eqref{Eq_cost_typical_noiseless2} for {\em general} priors, comes from the similarity between the BP fidelity term and formulating the MSE as $\|\tilde{\x}-\x\|_2^2=\sum \limits_{i=1}^{n} | \v_i^T (\x-\tilde{\x})|^2$ (note that the sum here goes over all the $n$ basis vectors in $\V$).
For $\ell_2$ priors, we indeed have shown in Section~\ref{sec_math_perf} that this ``equal weighting'' strategy translates to the fact that $\{ bias_{BP}^{2(i)} \}$ do not depend on $\{  \lambda_i^2 \}$, contrary to $\{ bias_{LS}^{2(i)} \}$, which later yields the MSE advantage of BP over LS in Observation~\ref{observ3}. For $\ell_2$ priors, we have obtained analytical results and tradeoffs also for the noisy case. 
For other priors, we empirically show in Section~\ref{sec_exp} correlation to the above analytical findings.
}

An important factor that is not taken into account in the above analysis is optimization, since for $\ell_2$ priors there is a closed-form solution. Yet, for sophisticated priors iterative optimization schemes are inevitable, and the regularization parameter has an effect which is similar to the step size in these schemes. In such cases, extremely low value of $\beta$ inherently results in a massive slowdown in the convergence for convex priors \cite{hale2008fixed, giryes2018tradeoffs} and/or bad local minima for non-convex priors.
Taking a numerical optimization point of view, in the sequel we empirically show that
$\hat{\x}_{BP}$ is superior to $\hat{\x}_{LS}$ even for $\ell_2$ priors with $\beta \to 0$, if few iterations of conjugate gradients are used instead of the closed-form expressions \eqref{Eq_cost_typical_est} and \eqref{Eq_cost_idbp_est}. This implementation choice may be preferable in high-dimensional problems when it is not possible to invert the matrices. The advantage of BP in this case follows from the fact that the eigenvalues of $\P_A$ are only 1 (in the row space of $\A$) and 0 (in the null space of $\A$), while $\A^T\A$ may have very different eigenvalues in general, and conjugate gradients (among other methods) performs better when the 
eigenvalues are clustered \cite{kelley1995iterative}.
%\textcolor{blue}{
In Section \ref{sec_exp} we provide empirical evidence that BP requires less iterations than LS also for other optimization schemes and priors.
%}

\section{Experiments with $\ell_2$ Priors}
\label{sec_math_ver}

In this section, we discuss the implications of the analytical results from Section \ref{sec_math} and verify them for specific observation models:
super-resolution and compressed sensing. \tomtb{In the first, all the singular values of $\A$ are smaller than 1 and the condition number of $\A\A^T$ is large, while in the latter it is possible that all singular values are greater than 1 and that the condition number is very moderate.}  
We also discuss the typical deblurring problem, 
\tomtb{which is highly ill-conditioned.} 
In this case, $\A^\dagger$ 
in $\hat{\x}_{BP}$ 
has to be regularized due to the large number of near zero singular values, and \eqref{Eq_cost_idbp_mse} needs to be modified accordingly.

\tomt{
Throughout this section, we use the closed-form estimators in \eqref{Eq_cost_typical_est} and \eqref{Eq_cost_idbp_est} to restore the images. The empirical performance of these two estimators is presented by markers, while the analytical expressions from \eqref{Eq_cost_typical_mse2} and \eqref{Eq_cost_bp_mse2} are plotted in solid curves. Different colors are used to distinguish between the two fidelity terms that are used for the estimation. 
}

\begin{figure}
  \centering
  \begin{subfigure}[b]{0.49\linewidth}
    \centering\includegraphics[width=100pt]{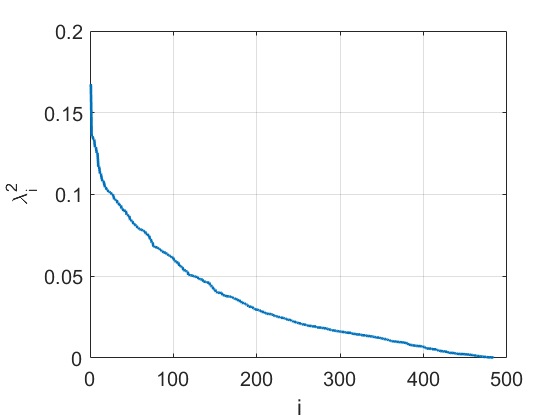}
    \caption{\label{fig:SR_eigenvalues}}
  \end{subfigure}%
  \begin{subfigure}[b]{0.49\linewidth}
    \centering\includegraphics[width=100pt]{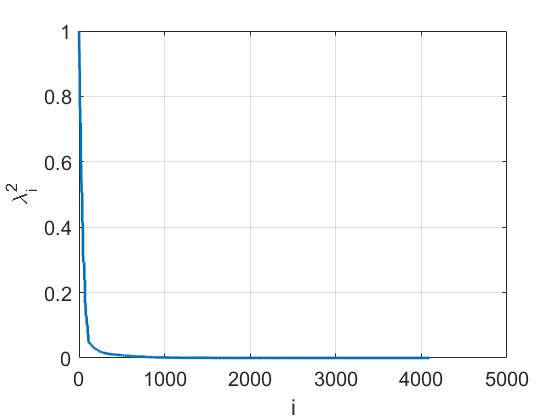}
    \caption{\label{fig:deb_eigenvalues}}
  \end{subfigure}%
\\
  \begin{subfigure}[b]{0.49\linewidth}
    \centering\includegraphics[width=100pt]{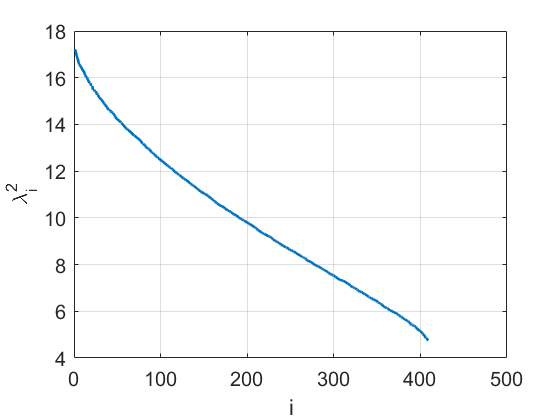}
    \caption{\label{fig:CS_eigenvalues}}
  \end{subfigure}
  \begin{subfigure}[b]{0.49\linewidth}
    \centering\includegraphics[width=100pt]{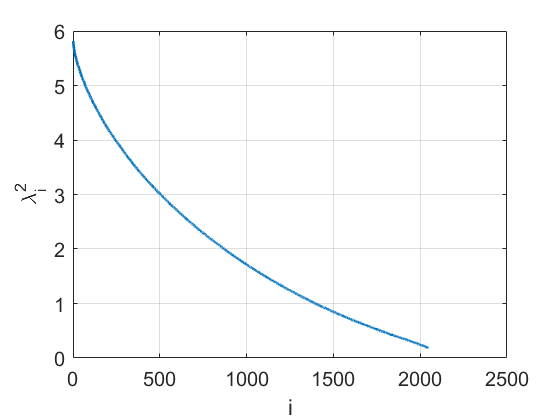}
    \caption{\label{fig:CS_eigenvalues2}}
  \end{subfigure}
  \caption{The (squared) singular values of $\A$ applied on a $64 \times 64$ image for: (\subref{fig:SR_eigenvalues}) SRx3 with $7 \times 7$ Gaussian filter \tomtb{($\frac{\lambda_1^2}{\lambda_m^2}=2.93\mathrm{e}3$)}; (\subref{fig:deb_eigenvalues}) blurring with $9 \times 9$ uniform filter \tomtb{($\frac{\lambda_1^2}{\lambda_m^2}=1.46\mathrm{e}7$)}; (\subref{fig:CS_eigenvalues}) CS with $m=0.1n$ Gaussian measurements and Haar basis \tomtb{($\frac{\lambda_1^2}{\lambda_m^2}=3.63$)}; (\subref{fig:CS_eigenvalues2}) CS with $m=0.5n$ Gaussian measurements and Haar basis \tomtb{($\frac{\lambda_1^2}{\lambda_m^2}=33.36$)}.}
\label{fig:eigenvalues}
\end{figure}

\subsection{Super-resolution}
\label{sec_math_ver_sr}

Let us consider the super-resolution (SR) task, 
where $\A$ is a composite operator of blurring (e.g. anti-aliasing filtering) followed by down-sampling. 
Note that the largest singular value of a typical low-pass filtering operation is 1, and it is associated with the DC \tomt{(i.e. the magnitude of the Fourier coefficient that is associated with zero frequency)}. The rest of the singular values are smaller than 1. 
The subsequent operator is subsampling, which inevitability reduces the energy of the signal (as $m<n$). Therefore, essentially, all the singular values of $\A$ are smaller than 1. 
\tomtb{Accordingly, the condition number of $\A\A^T$ is large.
These properties are} demonstrated in Fig. \ref{fig:SR_eigenvalues}  
for SR with scale factor 3 and 
Gaussian filter of size $7 \times 7$ and standard deviation 1.6 
(used in many works, e.g. \cite{dong2013nonlocally, tirer2018super, zhang2017learning}), which is performed on a $64 \times 64$ image (thus $n=4096$ and $m=484$).
We consider such a small image to allow computing the SVD of $\A$ (our analytic expressions require both $\{\lambda_i^2 \}_{i=1}^{m}$ and $\V$).  

We verify our analytical results for the SRx3 scenario mentioned above, and two cases: $\sigma_e=0$ and Gaussian noise with $\sigma_e=\sqrt{2}$. The experiments are performed on the {\em cameraman} image, resized to $64 \times 64$ pixels. In the noisy case, we average the results over 5 noise realizations. We have observed similar results for other images as well. 
We use the $\ell_2$ prior $s(\tilde{\x})=\frac{1}{2}\|\tilde{\x}\|_2^2$, which satisfies the assumptions ($\D=\I_n$ and $\gamma_i=1$).

The PSNR\footnote{The PSNR for a recovery $\hat{\x}$ of a uint8 image $\x \in \Bbb R^n$ is computed as $10\mathrm{log}_{10}\Big (\frac{255^2}{\frac{1}{n}\|\hat{\x}-\x\|_2^2} \Big )$.} results are presented in Fig. \ref{fig:SR_theory} and validate the analytical expressions. For $\sigma_e=0$, $\hat{\x}_{BP}$ is better than $\hat{\x}_{LS}$ for any value of the parameter $\beta$, as implied by Observation \ref{observ2} since all the singular values of $\A$ are smaller than 1 (Fig. \ref{fig:SR_eigenvalues}). 
\tomtb{
The rather large gap in favor of BP also agrees with Observation~\ref{observ3} that predicts it when the ratio $\lambda_1^2/\lambda_m^2$ is large.
The fact that BP at $\beta/\lambda_1^2=5.97\beta$ outperforms LS at $\beta$, further verifies Observation~\ref{observ3}.} 
For $\sigma_e=\sqrt{2}$, the gap between the estimators is reduced because $var_{BP}$  
is worse than $var_{LS}$ at handling the small singular values, as mentioned in Observation \ref{observ1}. 

To demonstrate the numerical optimization advantage of the BP cost over the LS cost for $\beta \to 0$ (where the gap between the bias terms in \eqref{Eq_cost_typical_bias_var} and \eqref{Eq_cost_idbp_bias_var} tends to 0), we repeat the experiments above for very small values of $\beta$. However, this time instead of inverting the matrices in \eqref{Eq_cost_typical_est} and \eqref{Eq_cost_idbp_est} we obtain the estimators using the conjugate gradient method. The results are presented in Fig. \ref{fig:SR_theory_cg}. Remarkably, a single iteration is enough for obtaining the exact BP estimator (for $\ell_2$ prior).

\begin{figure}
  \centering
  \begin{subfigure}[b]{0.5\linewidth}
    \centering\includegraphics[width=120pt]{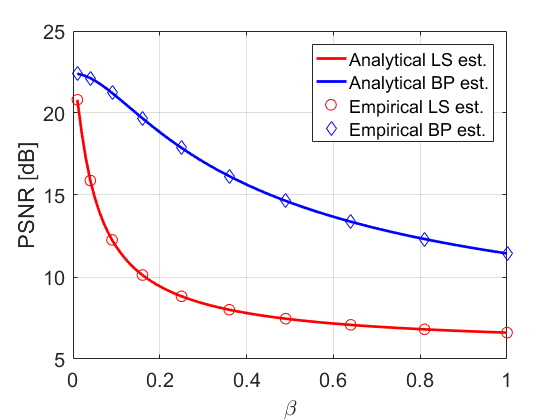}
    \caption{\label{fig:SR_noiseless}}
  \end{subfigure}%
  \begin{subfigure}[b]{0.5\linewidth}
    \centering\includegraphics[width=120pt]{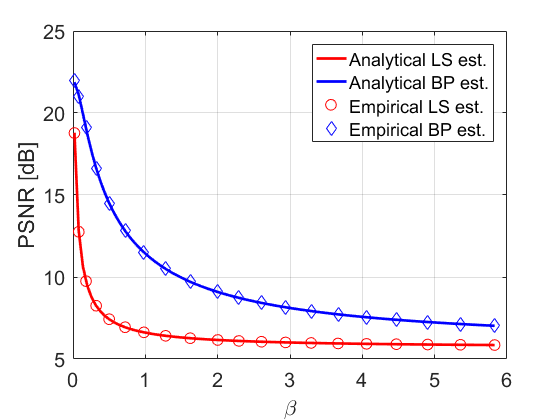}
    \caption{\label{fig:SR_noisy}}
  \end{subfigure}
  \caption{Super-resolution with Gaussian filter and scale factor of 3, using $\ell_2$ prior. PSNR (for {\em cameraman}) vs. $\beta$ (regularization parameter), for (\subref{fig:SR_noiseless}) $\sigma_e=0$, and (\subref{fig:SR_noisy}) $\sigma_e=\sqrt{2}$.}
\label{fig:SR_theory}
%\end{figure}

\vspace{4mm}

%\begin{figure}
  \centering
  \begin{subfigure}[b]{0.5\linewidth}
    \centering\includegraphics[width=120pt]{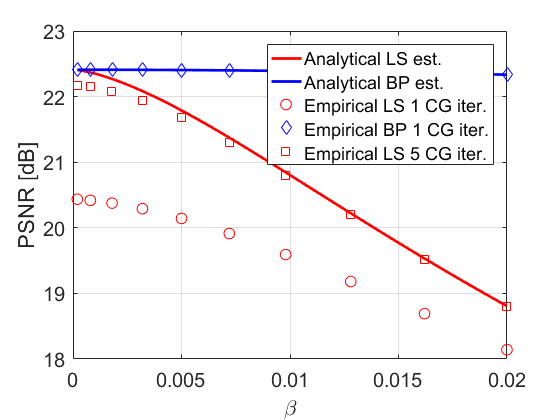}
    \caption{\label{fig:SR_noiseless_cg}}
  \end{subfigure}%
  \begin{subfigure}[b]{0.5\linewidth}
    \centering\includegraphics[width=120pt]{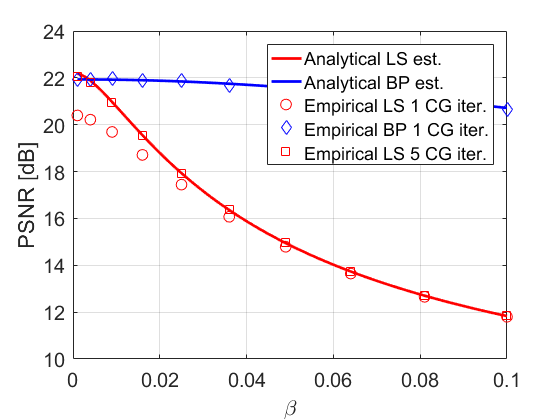}
    \caption{\label{fig:SR_noisy_cg}}
  \end{subfigure}
  \caption{Super-resolution with Gaussian filter and scale factor of 3, using $\ell_2$ prior and iterations of conjugate gradients instead of matrix inversion. PSNR (for {\em cameraman}) vs. $\beta$ (regularization parameter), for (\subref{fig:SR_noiseless_cg}) $\sigma_e=0$, and (\subref{fig:SR_noisy_cg}) $\sigma_e=\sqrt{2}$. 
Note that the LS cost requires more CG iterations than the BP cost to attend the solution (solid line).
}
\label{fig:SR_theory_cg}
\end{figure}

\begin{figure}
  \centering
  \begin{subfigure}[b]{0.5\linewidth}
    \centering\includegraphics[width=120pt]{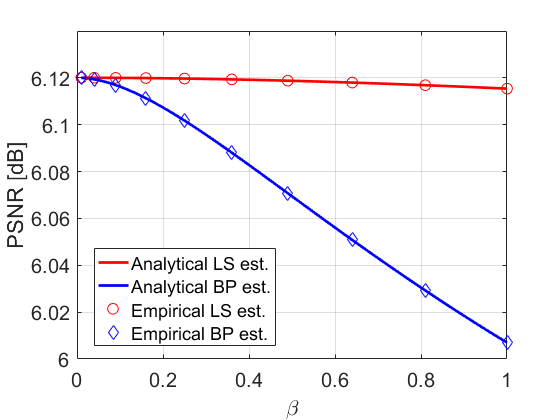}
    \caption{\label{fig:CS_0p2}}
  \end{subfigure}%
  \begin{subfigure}[b]{0.5\linewidth}
    \centering\includegraphics[width=120pt]{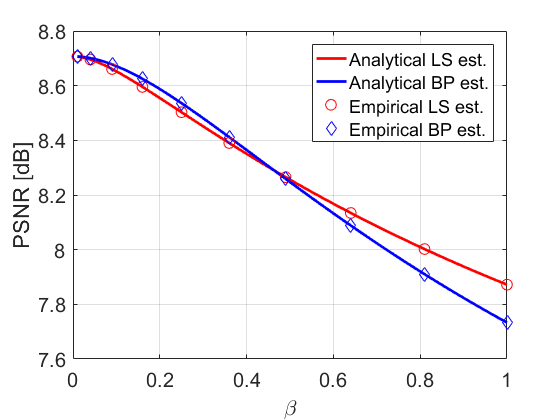}
    \caption{\label{fig:CS_0p5}}
  \end{subfigure}
  \caption{Compressed sensing with Gaussian measurements and Haar basis, using $\ell_2$ prior. PSNR (for {\em cameraman}) vs. $\beta$ (regularization parameter), for (\subref{fig:CS_0p2}) $m=0.1n$, and (\subref{fig:CS_0p5}) $m=0.5n$.}
\label{fig:CS_theory}
%\end{figure}

\vspace{4mm}

%\begin{figure}
  \centering
  \begin{subfigure}[b]{0.5\linewidth}
    \centering\includegraphics[width=120pt]{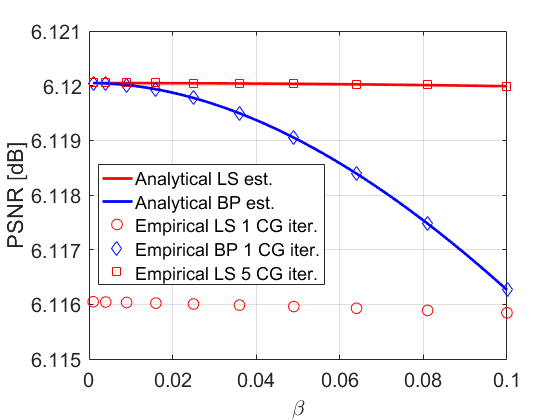}
    \caption{\label{fig:CS_0p2_cg}}
  \end{subfigure}%
  \begin{subfigure}[b]{0.5\linewidth}
    \centering\includegraphics[width=120pt]{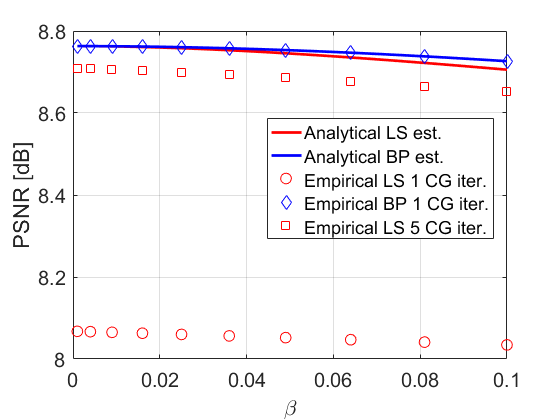}
    \caption{\label{fig:CS_0p5_cg}}
  \end{subfigure}
  \caption{Compressed sensing with Gaussian measurements and Haar basis, using $\ell_2$ prior and iterations of conjugate gradients instead of matrix inversion. PSNR (for {\em cameraman}) vs. $\beta$ (regularization parameter), for (\subref{fig:CS_0p2_cg}) $m=0.1n$, and (\subref{fig:CS_0p5_cg}) $m=0.5n$. 
Note that the LS cost requires more CG iterations than the BP cost to attend the solution (solid line).
}
\label{fig:CS_theory_cg}
\end{figure}

\subsection{Compressed sensing}
\label{sec_math_ver_cs}

Contrary to SR scenarios, in compressed sensing (CS) \tomtb{the condition number of $\A\A^T$ is moderate} and the singular values of $\A$ may be larger than 1. Consider the commonly examined scenario where $\A$ is the multiplication of an $m \times n$ Gaussian measurement matrix (whose i.i.d. entries are drawn from $\mathcal{N}(0,1/m)$) with an $n \times n$ Haar wavelet basis. We have observed that for high compression, e.g. $m/n=0.1$, all the singular values are larger than 1 \tomtb{and the condition number is very small}, as demonstrated in Fig.~\ref{fig:CS_eigenvalues}. However, for lower compression, e.g. $m/n=0.5$, there are also singular values smaller than 1 \tomtb{and the condition number increases}, as demonstrated in Fig.~\ref{fig:CS_eigenvalues2}.

We verify our analytical results for these two 
compression ratios (both with $\sigma_e=0$). 
The experiments are performed on the same $64 \times 64$ version of {\em cameraman} image, and we use again the $\ell_2$ prior $s(\tilde{\x})=\frac{1}{2}\|\tilde{\x}\|_2^2$.
The results are presented in Fig. \ref{fig:CS_theory} and validate the analytical expressions. For $m/n=0.1$, $\hat{\x}_{LS}$ is better than $\hat{\x}_{BP}$ for any value of $\beta$, as implied by Observation \ref{observ2} since all the singular values of $\A$ are greater than 1 (Fig. \ref{fig:CS_eigenvalues}). 
\tomtb{
To verify Observation~\ref{observ3} in this case, see that BP at $\beta/\lambda_1^2=0.058\beta$ has (slightly) higher PSNR than LS at $\beta$, e.g. for $\beta=1$. We have verified this also for very large values of $\beta$ (not presented here)---both curves decrease and reach a similar plateau at high $\beta$, yet BP at $\beta/\lambda_1^2$ indeed has higher PSNR than LS at $\beta$, but the difference is extremely small.} 
Interestingly, for $m/n=0.5$, where some singular values of $\A$ are smaller than 1 (Fig. \ref{fig:CS_eigenvalues2}), $\hat{\x}_{BP}$ gets better results than $\hat{\x}_{LS}$.
\tomtb{
Also in this scenario, it can be verified that BP at $\beta/\lambda_1^2=0.171\beta$ has higher PSNR than LS at $\beta$, as implied by Observation~\ref{observ3}.
The fact that the gap between BP and LS for $m/n=0.1$ and $m/n=0.5$ has been changed in favor of BP in the latter agrees with the derivation of Observation~\ref{observ3} in \eqref{Eq_observ3} that links the advantage of BP to an increased $\lambda_1^2/\lambda_m^2$ ratio.
}

We demonstrate again the numerical optimization advantage of the BP cost over the LS cost by repeating the experiments above for very small values of $\beta$, while using the conjugate gradient method instead of matrix inversion.
The results are presented in Fig. \ref{fig:CS_theory_cg}. It can be seen again that for the $\ell_2$ prior a single iteration is enough for obtaining the exact BP estimator.

We find it necessary to emphasize that compressed sensing scenarios require a sparsity-inducing prior, e.g. $s(\tilde{\x})=\|\tilde{\x}\|_1$ \tomt{or TV prior}, rather than an $\ell_2$ prior, for which both estimators exhibit poor results (i.e. very low PSNR). 
However, 
our purpose here is merely to validate our analysis, which applies only to $\ell_2$ priors, 
\tomtb{for a case in which all the singular values are greater than 1 and/or the condition number is small.}

Finally, note that for Gaussian $\A$ there is no efficient way to implement the operators $\A$ and $\A^T$ for large dimensions. 
Therefore, in practice, taking $\A$ to be the subsampled Fourier transform is more common, e.g. in sparse MRI \cite{lustig2007sparse}.
However, note that for this acquisition model $\A^\dagger$ is simply the Hermitian transpose of $\A$ (this property follows from the fact that the subsampled Fourier transform is a tight frame \cite{kovavcevic2008introduction}), which together with the unitarity of the Fourier transform 
leads to $\| \A^\dagger ( \y - \A\tilde{\x}) \|_2^2 = \| \y-\A\tilde{\x} \|_2^2$. This means that the two cost functions coincide, which is also implied by the fact that in this case all the singular values of $\A$ are 1 and thus \eqref{Eq_cost_typical_mse} is identical to \eqref{Eq_cost_idbp_mse}. Therefore, 
we do not make a comparison for this case.

\begin{figure}
  \centering
  \begin{subfigure}[b]{0.5\linewidth}
    \centering\includegraphics[width=120pt]{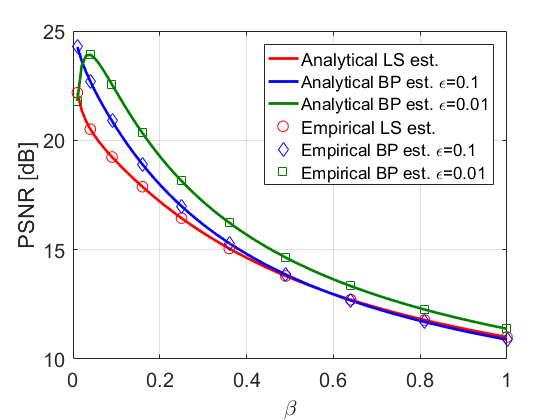}
    \caption{\label{fig:deb_noiseless}}
  \end{subfigure}%
  \begin{subfigure}[b]{0.5\linewidth}
    \centering\includegraphics[width=120pt]{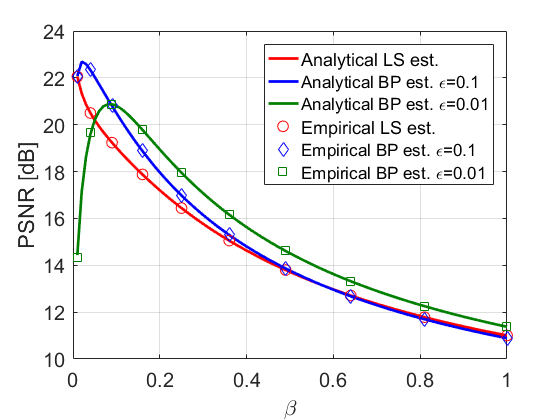}
    \caption{\label{fig:deb_noisy}}
  \end{subfigure}
  \caption{Deblurring with uniform $9 \times 9$ blur kernel, using $\ell_2$ prior. PSNR (for {\em cameraman}) vs. $\beta$ (regularization parameter), for (\subref{fig:deb_noiseless}) $\sigma_e=\sqrt{0.3}$, and (\subref{fig:deb_noisy}) $\sigma_e=\sqrt{2}$.}
\label{fig:deb_theory}
\end{figure}

\begin{figure}
  \centering
  \begin{subfigure}[b]{0.5\linewidth}
    \centering\includegraphics[width=120pt]{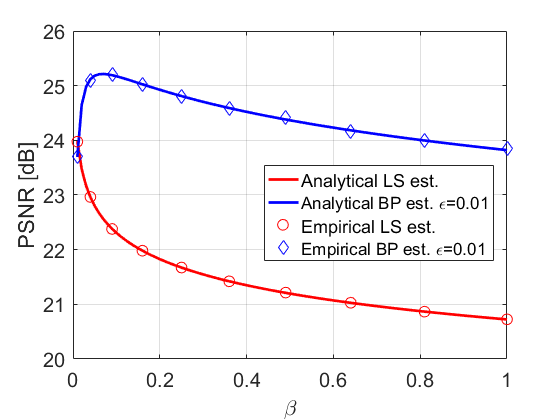}
    \caption{\label{fig:deb_dev_circ}}
  \end{subfigure}%
  \begin{subfigure}[b]{0.5\linewidth}
    \centering\includegraphics[width=120pt]{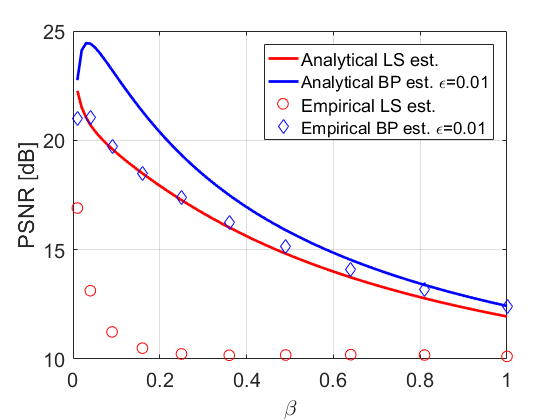}
    \caption{\label{fig:deb_dev_noncirc}}
  \end{subfigure}
  \caption{Deblurring with uniform $9 \times 9$ blur kernel and $\sigma_e=\sqrt{0.3}$, using priors with different $\D^T\D$. PSNR (for {\em cameraman}) vs. $\beta$ (regularization parameter), for (\subref{fig:deb_dev_circ}) circulant $\D^T\D$, and (\subref{fig:deb_dev_noncirc}) non-circulant $\D^T\D$. Note that $\D^T\D \neq \V \bGamma^2 \V^T$ in (\subref{fig:deb_dev_noncirc}).}
\label{fig:deb_dev_theory}
%\end{figure}

\vspace{4mm}

%\begin{figure}
  \centering
  \begin{subfigure}[b]{0.5\linewidth}
    \centering\includegraphics[width=120pt]{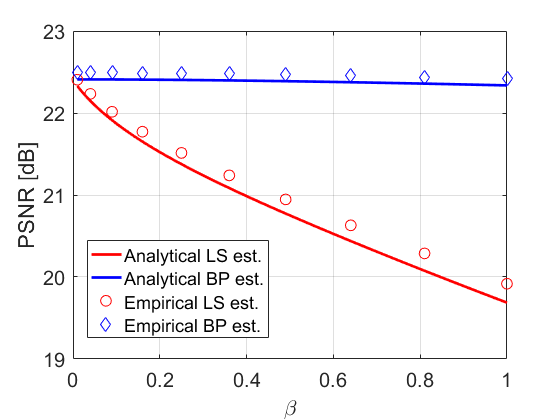}
    \caption{\label{fig:SR_dev_circ}}
  \end{subfigure}%
  \begin{subfigure}[b]{0.5\linewidth}
    \centering\includegraphics[width=120pt]{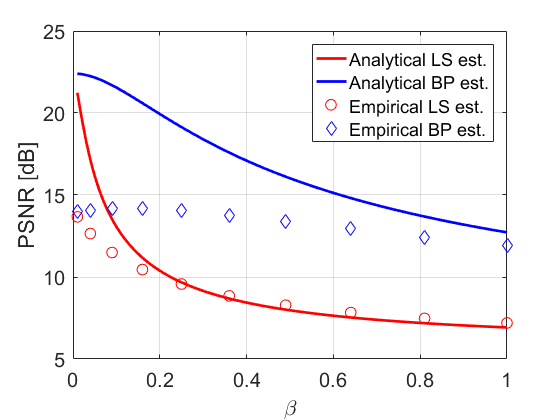}
    \caption{\label{fig:SR_dev_noncirc}}
  \end{subfigure}
  \caption{Super-resolution with Gaussian filter and scale factor of 3 and $\sigma_e=0$, using priors with different $\D^T\D$. PSNR (for {\em cameraman}) vs. $\beta$ (regularization parameter), for (\subref{fig:SR_dev_circ}) circulant $\D^T\D$, and (\subref{fig:SR_dev_noncirc}) non-circulant $\D^T\D$. Note that $\D^T\D \neq \V \bGamma^2 \V^T$ in (\subref{fig:SR_dev_circ}) and (\subref{fig:deb_dev_noncirc}).}
\label{fig:SR_dev_theory}
\end{figure}

\subsection{Deblurring}
\label{sec_math_ver_deb}

In the deblurring problem, $\A$ is a square ($m=n$) ill-conditioned matrix that performs blurring (i.e. filtering by a blur kernel).  
Typically, the blur kernel coefficients are normalized such that their sum is 1. Thus, the largest singular value of $\A$ is 1 (associated with the DC), and many other singular values are near 0.
\tomtb{Accordingly, the condition number of $\A\A^T$ is extremely large. These properties are} 
demonstrated in Fig. \ref{fig:deb_eigenvalues} for uniform kernel of size $9 \times 9$ (used in many works, e.g. \cite{guerrero2008image, danielyan2012bm3d, tirer2019image}).

Note that if one uses $\hat{\x}_{BP}$, exactly as defined in \eqref{Eq_cost_idbp_est}, \tomtb{both Observation~\ref{observ2} and Observation~\ref{observ3} imply an advantage of $\hat{\x}_{BP}$ over $\hat{\x}_{LS}$ in the noiseless case due to small singular values and a large condition number, respectively.}  
However, since in the deblurring problem $\A$ is not rank-deficient but rather (very) ill-conditioned, deblurring scenarios always assume that the measurements are noisy (typically with low noise levels).
Therefore, it is required to regularize the inversion of $\A\A^T$ in $\A^\dagger$ in order to mitigate the effect of near zero $\{\lambda_i\}$ on the variance of $\hat{\x}_{BP}$. 
A common regularized inversion is diagonal loading: inverting $\A\A^T+\epsilon\I_n$ instead of $\A\A^T$, where $\epsilon$ is a parameter. This is equivalent to replacing $\lambda_i^2$ with $\lambda_i^2+\epsilon$ 
in the eigen-decomposition of $\A\A^T$.

For $\hat{\x}_{BP}$ with such a regularized inversion, it is not hard to repeat the computations in \eqref{Eq_cost_idbp_mse} and obtain a very similar result, where $1_{i \leq m}$ is replaced with $\lambda_i^2/(\lambda_i^2+\epsilon)$ and $\lambda_i^{-2}1_{i \leq m}$ is replaced with $\lambda_i^2/(\lambda_i^2+\epsilon)^2$. Formally, we get 
\begin{align}
\label{Eq_cost_idbp_mse_diag_load}
MSE_{BP} &  = \sum \limits_{i=1}^{n} \Big ( \frac{\lambda_i^2/(\lambda_i^2+\epsilon)}{\lambda_i^2/(\lambda_i^2+\epsilon)+\beta \gamma_i^2} - 1 \Big )^2 [\V^T\x]_i^2 \nonumber \\
& \hspace{10pt} + \sigma_e^2 \sum \limits_{i=1}^{n} \frac{\lambda_i^2/(\lambda_i^2+\epsilon)^2}{(\lambda_i^2/(\lambda_i^2+\epsilon)+\beta \gamma_i^2)^2}, \nonumber \\
&  = \sum \limits_{i=1}^{n} \Big ( \frac{\beta \gamma_i^2}{\lambda_i^2/(\lambda_i^2+\epsilon)+\beta \gamma_i^2} \Big )^2 [\V^T\x]_i^2  \nonumber \\
& \hspace{10pt} + \sigma_e^2 \sum \limits_{i=1}^{n} \frac{1}{\lambda_i^2(1+\beta \gamma_i^2(\lambda_i^2+\epsilon)/\lambda_i^2)^2}.
\end{align}
Therefore, as could be expected, increasing the amount of regularization $\epsilon$ reduces the variance of $\hat{\x}_{BP}$ but increases its bias. 
As a sanity check, observe that for $\epsilon \to 0$ we get that \eqref{Eq_cost_idbp_mse_diag_load} coincides with \eqref{Eq_cost_idbp_mse} (recall $m=n$).
Since in this case the performance of $\hat{\x}_{BP}$ depends on the couple $(\beta,\epsilon)$, we cannot obtain clear properties like 
\tomtb{the observations in Section~\ref{sec_math_perf}} 
that hold uniformly for any parameter setting. 
Yet, as demonstrated below and in the sequel, we have empirically observed that it is possible to find settings of $(\beta,\epsilon)$ that balance the bias and variance of $\hat{\x}_{BP}$ and therefore lead to very good results despite the observed noise.

We verify \eqref{Eq_cost_idbp_mse_diag_load} for the uniform blur kernel mentioned above, and two levels of Gaussian noise: $\sigma_e=\sqrt{0.3}$ and $\sigma_e=\sqrt{2}$. The experiments are performed on the $64 \times 64$ version of {\em cameraman} image, and we use again the $\ell_2$ prior. 
The results are presented in Fig. \ref{fig:deb_theory}. They show that $\hat{\x}_{BP}$ with good tuning of $(\beta,\epsilon)$ can outperform $\hat{\x}_{LS}$, especially when the noise level is low. This implies that ''well-tuned'' $\hat{\x}_{BP}$ 
handles the 
\tomtb{badly conditioned} 
$\A$ (Fig. \ref{fig:deb_eigenvalues}) better than $\hat{\x}_{LS}$.

%\subsection{\tomt{The effect of $\D^T\D \neq \V \bGamma^2 \V^T$ on the results}}
\subsection{\tomt{The effect of the joint right singular vectors assumption}}
\label{sec_math_ver_deviation}

\tomt{
In this section, we compare the empirical MSE and the analytical formulas in \eqref{Eq_cost_typical_mse2}, \eqref{Eq_cost_bp_mse2} and \eqref{Eq_cost_idbp_mse_diag_load} in cases where the condition $\D^T\D=\V \bGamma^2 \V^T$ is violated (recall that the columns of $\V$ are the right singular vectors of $\A$ and $\bGamma^2$ is a diagonal matrix, as defined in Section~\ref{sec_math_assump}). 
Since our formulas require the diagonal of $\bGamma^2$ (i.e. $\{ \gamma_i^2 \}$), we compute it as the diagonal of $\V^T\D^T\D\V$, which is exact under the analysis assumption and can be regarded as an approximation when $\D^T\D \neq \V \bGamma^2 \V^T$.
}

\tomt{
We start with examining the case of $\D^T\D = \bOmega_{DIF}^T\bOmega_{DIF}+0.01\I_n$, where $\bOmega_{DIF}$ is the 2D finite difference operator and the diagonal loading is required to make $\D^T\D \succ 0$.
Note that for the deblurring task we have that both $\A$ and $\D^T\D$ are circulant matrices that can be diagonalized by the DFT matrix. Therefore, the condition $\D^T\D = \V \bGamma^2 \V^T$ holds for $\V$ that equals the (inverse) DFT matrix.
However, for the SR task $\A$ cannot be singularly decomposed by a Fourier basis. Therefore, the condition cannot be satisfied.
}

\tomt{
We repeat previous deblurring and SR experiments with the examined $\D^T\D$.
The results are presented in Figs.~\ref{fig:deb_dev_circ} and \ref{fig:SR_dev_circ}. For deblurring we see perfect agreement between the empirical results and the analytical formulas (as expected). For SR we see that violating the condition has led to a small gap between the empirical results and the formulas.
}

\tomt{
Now, we further increase the violation of the condition by breaking the circularity property of $\D^T\D$. We do it by replacing $\bOmega_{DIF}$ with a non-circulant operator $\tilde{\bOmega}$ that performs finite difference only on every 8th pixel (and identity on the rest).
We repeat the previous deblurring and SR experiments and present the results in Figs.~\ref{fig:deb_dev_noncirc} and \ref{fig:SR_dev_noncirc}.
It is easy to see that the deviation of the empirical results from the formulas further grows for both tasks. 
}

\tomt{
The experiments demonstrate that the deviation between the empirical MSE and the analytical expressions is proportional to how much the condition on $\D^T\D$ is violated. 
Yet, the overall trend in the curves still shares similarity with the analytical results, 
which motivates considering the observations obtained by the analytical analysis for practical sophisticated priors.
}

\begin{figure}
  \centering
  \begin{subfigure}[b]{0.5\linewidth}
    \centering\includegraphics[width=120pt]{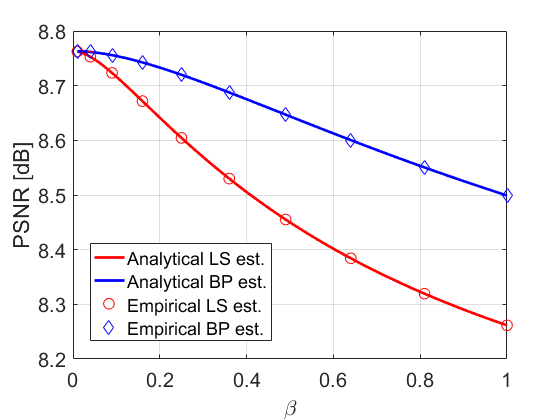}
    \caption{\label{fig:CS_0p5_prior1}}
  \end{subfigure}%
  \begin{subfigure}[b]{0.5\linewidth}
    \centering\includegraphics[width=120pt]{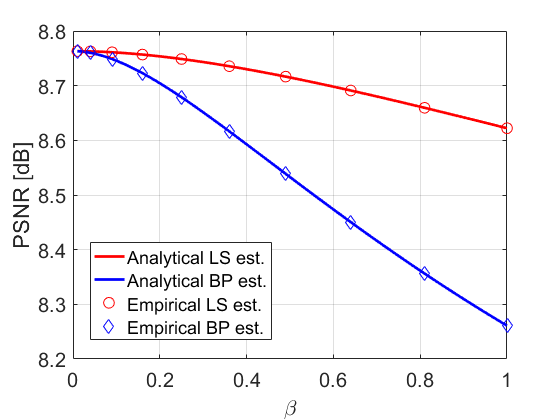}
    \caption{\label{fig:CS_0p5_prior2}}
  \end{subfigure}
  \caption{Compressed sensing with $m=0.5n$ Gaussian measurements and Haar basis, using $\ell_2$ prior. PSNR (for $\x$ that is the projection of {\em cameraman} onto $\mathcal{W}^\perp$) vs. $\beta$ (regularization parameter), for (\subref{fig:CS_0p5_prior1}) $\mathcal{W}$ that is spanned by the columns of $\V(:,1:m-500)$, and (\subref{fig:CS_0p5_prior2}) $\mathcal{W}$ that is spanned by the columns of $\V(:,1000:m)$.}
\label{fig:CS_theory_prior}
\end{figure}

\subsection{\tomt{Incorporating prior knowledge on $\x$ with the results}}
\label{sec_math_ver_subspace_prior}

\tomt{
The analytical MSE formulas in \eqref{Eq_cost_typical_mse2} and \eqref{Eq_cost_bp_mse2} are conditioned on the latent image $\x$, as the expectations are taken only with respect to the noise $\e$.
These expressions have led to} 
\tomtb{observations in Section~\ref{sec_math}} 
\tomt{
that depend only the singular values of $\A$ (i.e. $\{\lambda_i\}$) and do not require prior knowledge on $\x$ (recall that accurately modeling natural images is difficult).
The usefulness of these observations for preferring one fidelity term over the other for sophisticated priors is demonstrated in Section~\ref{sec_exp}.
}

\tomt{
However, a natural question arises: How can one leverage prior knowledge on $\x$ to improve the criterion for choosing the fidelity term?
}

\tomt{
In this section we briefly demonstrate, using a controlled experiment, how the observations in Section~\ref{sec_math} can be polished given a constraint that $\x$ resides in $\mathcal{W}^\perp$ the orthogonal complement of a {\em known} subspace $\mathcal{W}$. Note that for low-dimensional $\mathcal{W}$ such that $m<n-\mathrm{dim}(\mathcal{W})$, we still have an ill-posed linear inverse problem.
}

\tomt{
We consider the compressed sensing scenario from Section~\ref{sec_math_ver_cs} where $n=64^2$, $m/n=0.5$ and $\sigma_e=0$. 
In this case $\A$ has (more than 1000) singular values that are larger than 1 and (slightly more than 500) singular values that are smaller than 1 (see Fig.~\ref{fig:CS_eigenvalues2}). 
Therefore, the events in the 'in particular'-part in Observation \ref{observ2} do not occur.
Indeed, observe that in Fig.~\ref{fig:CS_0p5} none of the estimators is consistently (i.e. for any $\beta$) better than the other when $\x$ is the {\em cameraman} image.
}

\tomt{
Now, let us use the notation from Section~\ref{sec_math}, where the columns of $\V$, that is, the right singular vectors of $\A$, are ordered according to a descending order of the singular values (from 1 to $m$), and the last $n-m$ columns span the null space of $\A$.
Suppose that $\mathcal{W}$ is the subspace spanned by the columns of $\V(:,1:m-500)$, where we use Matlab notation, 
and that $\x \in \mathcal{W}^\perp$. Due to the orthogonality of $\V$, we have that $[\V^T\x]_i=0$ for any $1 \leq i \leq m-500$. 
Substituting this property in \eqref{Eq_cost_typical_bias_var} and \eqref{Eq_cost_idbp_bias_var}, we get
\begin{align}
\label{Eq_cost_bias_prior}
bias_{LS}^2 &=  \sum \limits_{i=m-499}^{m} bias_{LS}^{2(i)}  +  \sum \limits_{i=m+1}^{n} [\V^T\x]_i^2 , \nonumber \\
bias_{BP}^2 &=  \sum \limits_{i=m-499}^{m} bias_{BP}^{2(i)} +  \sum \limits_{i=m+1}^{n} [\V^T\x]_i^2.
\end{align}
Therefore, for the considered CS scenario, we have that $bias_{BP}^2<bias_{LS}^2$ for any $\beta$ (because $\lambda_i<1$ for all $m-499 \leq i \leq m$). Since $\sigma_e=0$, this implies that $MSE_{BP} < MSE_{LS}$ for any $\beta$. 
}

\tomt{
Note that for $\mathcal{W}$ that is the subspace spanned by the columns of $\V(:,1000:m)$ and $\x \in \mathcal{W}^\perp$ (i.e. $\x$ in a subspace spanned by columns of $\V$ that are either associated with singular values that are greater than 1 or with the null space of $\A$), similar arguments lead to $MSE_{BP} > MSE_{LS}$ for any $\beta$.
Fig.~\ref{fig:CS_theory_prior} verifies both results for a test image $\x$ that is the projection of the {\em cameraman} image onto $\mathcal{W}^\perp$ (i.e. $\x = \P_{\mathcal{W}^\perp}\x_0$, where $\x_0$ is the {\em cameraman} image).
}

\tomt{
Note that the behavior in Fig.~\ref{fig:CS_theory_prior} cannot be predicted by the 'in particular'-part in Observation \ref{observ2} that considers {\em all} the singular values of $\A$, {\em regardless} of $\x$.
We believe that a detailed study with constraints on $\x$ that better fit images is an interesting direction for future research. 
}

\begin{figure}[t]
  \centering
  \begin{subfigure}[b]{0.5\linewidth}
    \centering\includegraphics[width=120pt]{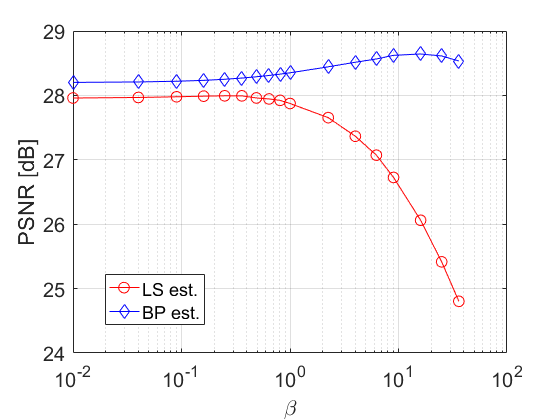}
    \caption{\label{fig:SR_tv_noiseless}}
  \end{subfigure}%
  \begin{subfigure}[b]{0.5\linewidth}
    \centering\includegraphics[width=120pt]{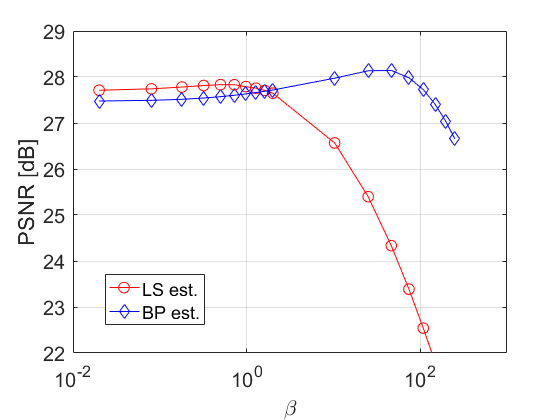}
    \caption{\label{fig:SR_tv_noisy}}
  \end{subfigure}
  \caption{Super-resolution with Gaussian filter and scale factor of 3, using TV prior and 100 iterations of FISTA. PSNR (averaged over 8 test images) vs. $\beta$ (regularization parameter), for (\subref{fig:SR_tv_noiseless}) $\sigma_e=0$, and (\subref{fig:SR_tv_noisy}) $\sigma_e=\sqrt{2}$.}
\label{fig:SR_tv}
%\end{figure}

\vspace{2mm}

%\begin{figure}[t]
  \centering
  \begin{subfigure}[b]{0.5\linewidth}
    \centering\includegraphics[width=120pt]{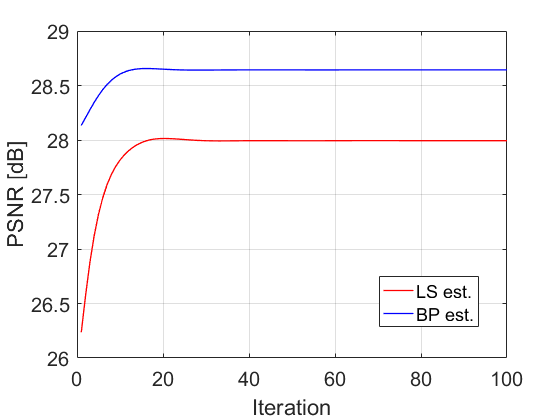}
    \caption{\label{fig:SR_tv_noiseless2}}
  \end{subfigure}%
  \begin{subfigure}[b]{0.5\linewidth}
    \centering\includegraphics[width=120pt]{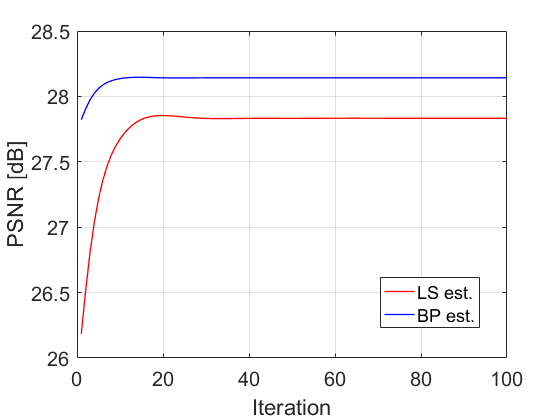}
    \caption{\label{fig:SR_tv_noisy2}}
  \end{subfigure}
  \caption{Super-resolution with Gaussian filter and scale factor of 3, using TV prior. PSNR (for best uniform setting of $\beta$, averaged over 8 test images) vs. FISTA iteration number, for (\subref{fig:SR_tv_noiseless2}) $\sigma_e=0$, and (\subref{fig:SR_tv_noisy2}) $\sigma_e=\sqrt{2}$.}
\label{fig:SR_tv_psnr_vs_iter}
\end{figure}

\begin{figure}[t]
  \centering
  \begin{subfigure}[b]{0.5\linewidth}
    \centering\includegraphics[width=120pt]{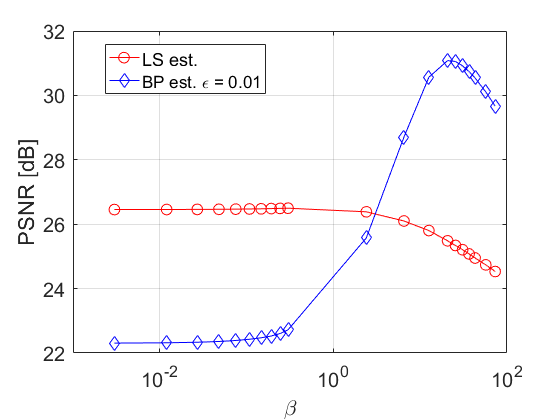}
    \caption{\label{fig:deb_tv_noiseless}}
  \end{subfigure}%
  \begin{subfigure}[b]{0.5\linewidth}
    \centering\includegraphics[width=120pt]{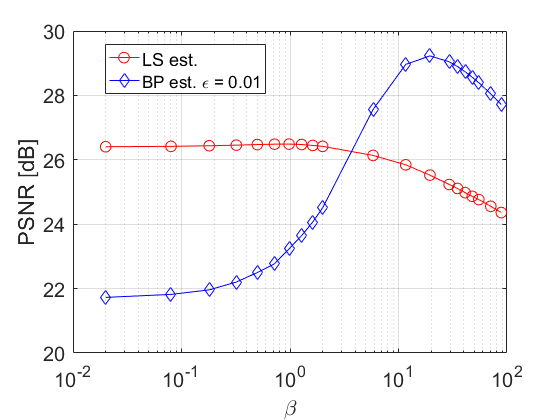}
    \caption{\label{fig:deb_tv_noisy}}
  \end{subfigure}
  \caption{Deblurring with uniform $9 \times 9$ blur kernel, using TV prior and 100 iterations of FISTA. PSNR (averaged over 8 test images) vs. $\beta$ (regularization parameter), for (\subref{fig:deb_tv_noiseless}) $\sigma_e=\sqrt{0.3}$, and (\subref{fig:deb_tv_noisy}) $\sigma_e=\sqrt{2}$.}
\label{fig:deb_tv}
%\end{figure}

\vspace{5.5mm}

%\begin{figure}[t]
  \centering
  \begin{subfigure}[b]{0.5\linewidth}
    \centering\includegraphics[width=120pt]{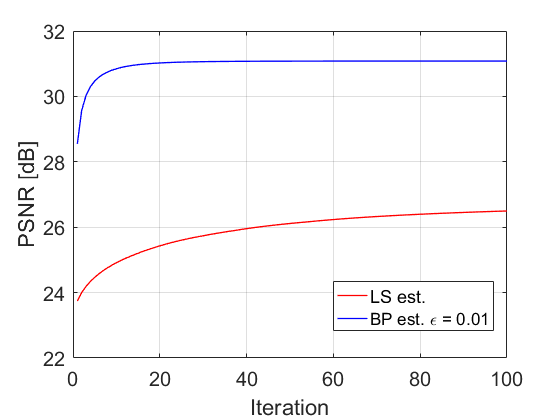}
    \caption{\label{fig:deb_tv_noiseless2}}
  \end{subfigure}%
  \begin{subfigure}[b]{0.5\linewidth}
    \centering\includegraphics[width=120pt]{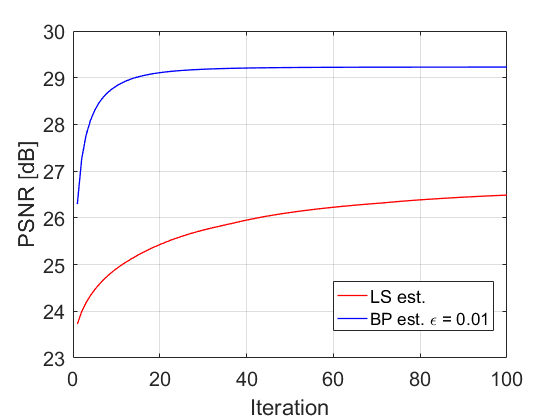}
    \caption{\label{fig:deb_tv_noisy2}}
  \end{subfigure}
  \caption{Deblurring with uniform $9 \times 9$ blur kernel, using TV prior. PSNR (for best uniform setting of $\beta$, averaged over 8 test images) vs. FISTA iteration number, for (\subref{fig:deb_tv_noiseless2}) $\sigma_e=\sqrt{0.3}$, and (\subref{fig:deb_tv_noisy2}) $\sigma_e=\sqrt{2}$.}
\label{fig:deb_tv_psnr_vs_iter}
\end{figure}

\section{Experiments with Sophisticated Priors}
\label{sec_exp}

In this section we empirically demonstrate that the behavior of $\hat{\x}_{BP}$ and $\hat{\x}_{LS}$ (the minimizers of $f_{BP}(\tilde{\x})$ and $f_{LS}(\tilde{\x})$) for sophisticated convex and non-convex priors (for whom mathematical analysis is hard or even intractable) strongly correlates with properties for which we have established concrete mathematical reasoning in the case of $\ell_2$ priors.
Specifically, 
\tomtb{for super-resolution and deblurring tasks 
(where the condition number of $\A\A^T$ is very large) 
BP cost function can lead to significantly improved results compared to the LS cost function, 
yet, there is inverse proportion between the performance gap and the noise level 
(since the singular values of $\A$ are small in these tasks)}. 
\tomtb{
For Gaussian compressed sensing with 
low $m/n$ ratio (where the condition number is small and the singular values are greater than 1) $\hat{\x}_{BP}$ is not significantly better than $\hat{\x}_{LS}$, but it is quite robust to noise. However, when the $m/n$ ratio increases (then the condition number increases and some singular values are smaller than 1) the advantage of BP is more significant, but inversely proportional to the noise level.
}

\subsection{TV prior}
\label{sec_exp_tv}

We start with the widely-used (isotropic) total-variation (TV) prior \cite{rudin1992nonlinear}, which is given by
\begin{align}
\label{Eq_tv_prior}
s(\tilde{\x})=0.1 \sum\limits_{i,j} \sqrt{ |\tilde{x}_{i+1,j} - \tilde{x}_{i,j}|^2 + |\tilde{x}_{i,j+1} - \tilde{x}_{i,j}|^2 }
\end{align}
for a two-dimensional signal $\tilde{\x}$. The factor 0.1 is used to achieve good performance for $\beta=\sigma_e^2$ in case of denoising ($\A=\I_n$). Obviously, it does not affect the comparison between the methods, since $s(\tilde{\x})$ is multiplied by $\beta$ that can be set arbitrarily.
Note that $s(\tilde{\x})$ is convex, and thus $f_{LS}(\tilde{\x})$ and $f_{BP}(\tilde{\x})$ are also convex functions.
We choose to minimize them by the same method: 100 iterations of FISTA \cite{beck2009fast}, which is basically a variant of ISTA (see \eqref{Eq_ista} in the appendix) that is incorporated with Nesterov's accelerated gradient \cite{nesterov1983method}. 
The step size $\mu$ is the typical 1 over the Lipschitz constant of $\nabla  \ell(\tilde{\x})$, 
which in our case 
can be computed as 1 over the spectral norm of the constant Hessian matrix $\nabla^2 \ell$, i.e. $\mu = 1 / \| \P_A \| = 1$ for BP recovery and $\mu = 1 / \| \A^T\A \|$ (computed by the power method) for LS recovery.
This common choice of step size is known to ensure convergence in the convex setting \cite{beck2009fast}. 
Several methods for performing proximal mapping of $s(\tilde{\x})$ (i.e. Gaussian denoising associated with the TV prior) exist  \cite{goldstein2009split, beck2009fast2}. Here, we choose to apply split Bregman method \cite{goldstein2009split}.  
The experiments are performed on the following eight classical test images: {\em cameraman}, {\em house}, {\em peppers}, {\em Lena}, {\em Barbara}, {\em boat}, {\em hill} and {\em couple}.

\subsubsection{Super-resolution}
\label{sec_exp_tv_sr}

We compare the performance of $\hat{\x}_{LS}$ and $\hat{\x}_{BP}$ for SR with Gaussian anti-aliasing kernel (defined in Section \ref{sec_math_ver_sr}) and scale factor of 3. We consider the noiseless case $\sigma_e=0$, as well as the case of Gaussian noise with $\sigma_e=\sqrt{2}$.
For both estimators we initialize FISTA with the bicubic upsampling of $\y$.
For BP, the operator $\A^\dagger$ has fast implementation using the conjugate gradient method \cite{hestenes1952methods}. 
Fig. \ref{fig:SR_tv} shows the PSNR of the reconstructions, averaged over all images, for different values of the regularization parameter $\beta$.
Fig. \ref{fig:SR_tv_psnr_vs_iter} shows the average PSNR as a function of the iteration number, where for each estimator we use the value of $\beta$ which has led to its best results in Fig. \ref{fig:SR_tv} (0.25 for LS and 16 for BP in Fig. \ref{fig:SR_tv_noiseless2}; 0.5 for LS and 46 for BP in Fig. \ref{fig:SR_tv_noisy2}). It can be seen that $\hat{\x}_{BP}$ converges somewhat faster than $\hat{\x}_{LS}$. 
In Figs. \ref{fig:SR_cameraman_ls_tv} and \ref{fig:SR_cameraman_bp_tv} we also display the results for {\em cameraman} image in the noiseless case.

Note the agreement of the obtained results with the observations from Section \ref{sec_math}, even though they have been established for a much simpler convex prior. 
In the noiseless case, $\hat{\x}_{BP}$ outperforms $\hat{\x}_{LS}$ for any value of $\beta$,
while in the noisy scenario, this does not hold. However, even in the latter case, $\hat{\x}_{BP}$ (with good tuning of $\beta$) outperforms $\hat{\x}_{LS}$ (with good tuning of $\beta$). Yet, the gap between them (for optimal tuning) is smaller than in the noiseless case.

\begin{figure}[t]
  \centering
  \begin{subfigure}[b]{0.5\linewidth}
    \centering\includegraphics[width=120pt]{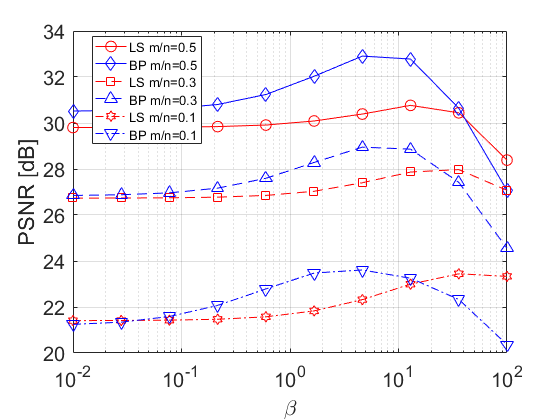}
    \caption{\label{fig:CS_tv_noiseless}}
  \end{subfigure}%
  \begin{subfigure}[b]{0.5\linewidth}
    \centering\includegraphics[width=120pt]{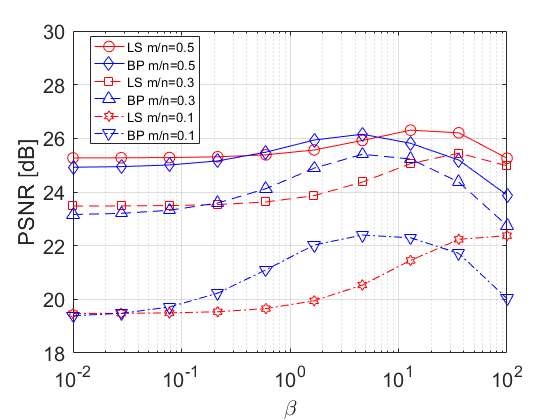}
    \caption{\label{fig:CS_tv_noisy}}
  \end{subfigure}
  \caption{Compressed sensing with Gaussian measurements, using TV prior and 500 iterations of FISTA. PSNR (averaged over 8 test images) vs. $\beta$ (regularization parameter), for (\subref{fig:CS_tv_noiseless}) $\sigma_e=0$, and (\subref{fig:CS_tv_noisy}) SNR of 20 dB.}
\label{fig:CS_tv}
%\end{figure}

\vspace{2mm}

%\begin{figure}[t]
  \centering
  \begin{subfigure}[b]{0.5\linewidth}
    \centering\includegraphics[width=120pt]{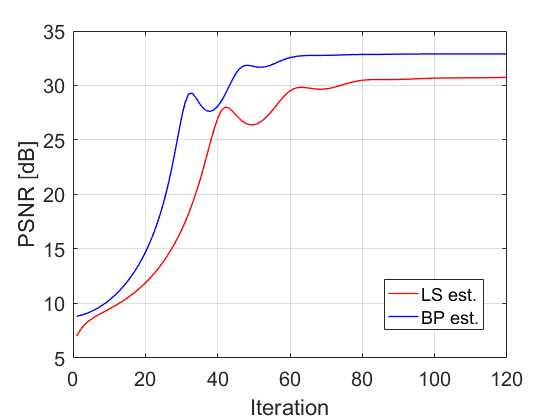}
    \caption{\label{fig:CS_tv_noiseless2}}
  \end{subfigure}%
  \begin{subfigure}[b]{0.5\linewidth}
    \centering\includegraphics[width=120pt]{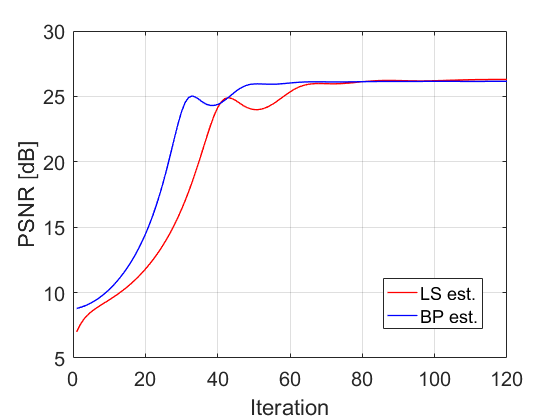}
    \caption{\label{fig:CS_tv_noisy2}}
  \end{subfigure}
  \caption{Compressed sensing with $m=0.5n$ Gaussian measurements, using TV prior. PSNR (for best uniform setting of $\beta$, averaged over 8 test images) vs. FISTA iteration number, for (\subref{fig:CS_tv_noiseless2}) $\sigma_e=0$, and (\subref{fig:CS_tv_noisy2}) SNR of 20 dB.}
\label{fig:CS_tv_psnr_vs_iter}
\end{figure}

\begin{figure}[t]
  \centering
  \begin{subfigure}[b]{0.49\linewidth}
    \centering\includegraphics[width=100pt]{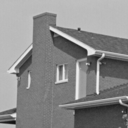}
    \caption{Original image}
  \end{subfigure}%
  \begin{subfigure}[b]{0.49\linewidth}
    \centering\includegraphics[width=100pt]{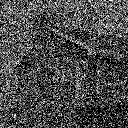}
    \caption{$\A^\dagger \y$ (7.93 dB)}
  \end{subfigure}%
\\
\vspace{1mm}
  \begin{subfigure}[b]{0.49\linewidth}
    \centering\includegraphics[width=100pt]{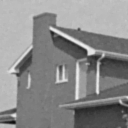}
    \caption{LS-TV (34.12 dB)}
  \end{subfigure}
  \begin{subfigure}[b]{0.49\linewidth}
    \centering\includegraphics[width=100pt]{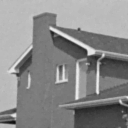}
    \caption{BP-TV (37.19 dB)}
  \end{subfigure}
  \caption{Compressed sensing with $m=0.5n$ Gaussian measurements and $\sigma_e=0$ for {\em house} image. From left to right and from top to bottom: original image, naive $\A^\dagger \y$, reconstruction of LS fidelity with TV prior, and reconstruction of BP fidelity with TV prior.}
\label{fig:CS_noiseless_TV}
\end{figure}

\subsubsection{Deblurring}
\label{sec_exp_tv_deb}

We compare the two estimators for the widely examined $9 \times 9$ uniform blur kernel mentioned in Section \ref{sec_math_ver_deb}. We make the common assumption of circular shift-invariant blur operator, which allows very fast implementation of the gradient steps in the optimization of both cost functions using Fast Fourier Transform (FFT).
We consider two levels of Gaussian noise: $\sigma_e=\sqrt{0.3}$ and $\sigma_e=\sqrt{2}$.
For both estimators we initialize FISTA with $\y$, and for $\hat{\x}_{BP}$ we use $\epsilon = 0.01\sigma_e^2$. 
Fig. \ref{fig:deb_tv} shows the average PSNR for different values of $\beta$, and Fig. \ref{fig:deb_tv_psnr_vs_iter} shows the average PSNR as a function of the iteration number, where each estimator uses the best $\beta$ from Fig. \ref{fig:deb_tv} (0.3 for LS and 20.5 for BP in Fig. \ref{fig:deb_tv_noiseless2}; 0.98 for LS and 19.5 for BP in Fig. \ref{fig:deb_tv_noisy2}).
Note that $\hat{\x}_{BP}$ converges much faster than $\hat{\x}_{LS}$. The difference here for deblurring is more significant than for SR.
Visual results for {\em couple} image in the case of $\sigma_e=\sqrt{2}$ are presented in Figs. \ref{fig:deb_couple_ls_tv} and \ref{fig:deb_couple_bp_tv}.

The obtained results agree with the observations in Section \ref{sec_math_ver_deb}, in the sense that there exist settings of $(\beta,\epsilon)$ for which $\hat{\x}_{BP}$ outperforms $\hat{\x}_{LS}$. Presumably, even for the more complex TV prior, this is due to a better handling 
\tomtb{of $\A$ whose condition number is very large}. 
As expected, the performance gap between the estimators (for optimal tuning) decreases when the noise level is higher. However, it is still highly in favor of $\hat{\x}_{BP}$.

\subsubsection{Compressed sensing}
\label{sec_exp_tv_cs}

\tomt{
We compare the performance of $\hat{\x}_{LS}$ and $\hat{\x}_{BP}$ for CS with Gaussian measurement matrix (i.e. $A_{ij} \sim \mathcal{N}(0,1/m)$), for which the two cost functions differ (see the discussion in Section \ref{sec_math_ver_cs}). 
In these CS experiments (only) we decrease the size of the test images to 128$\times$128 pixels, as there is no efficient way to implement the operators $\A$ and $\A^T$ for large dimensions. 
We consider compression ratios of $m/n=0.1$, $m/n=0.3$, and $m/n=0.5$. For each of them we examine the noiseless case and the case of Gaussian noise with signal-to-noise ratio (SNR) of 20 dB.
For both estimators we initialize FISTA with zero and use 500 iterations.
As we compute $\A^\dagger$ in advance, both estimators have similar computational cost per iteration. 
Fig.~\ref{fig:CS_tv} shows the average PSNR for different values of $\beta$. For $m/n=0.5$ we show in Fig.~\ref{fig:CS_tv_psnr_vs_iter} the average PSNR vs.~the iteration number, where each estimator uses the best $\beta$ from Fig.~\ref{fig:CS_tv}.
Again, note that $\hat{\x}_{BP}$ requires less iterations than $\hat{\x}_{LS}$. 
Visual results for {\em house} image in the noiseless case are presented in Fig.~\ref{fig:CS_noiseless_TV}.
}

\tomt{
The results show correlation with the observations in Section~\ref{sec_math}.
In the noiseless case, when the $m/n$ ratio increases (and thus 
\tomtb{the condition number of $\A\A^T$ increases}, 
e.g. see Figs.~\ref{fig:CS_eigenvalues} and \ref{fig:CS_eigenvalues2}) the performance gap between BP and LS increases in favor of BP. In the noisy case, when the $m/n$ ratio increases the BP estimator becomes more sensitive to noise (due to the increase in the number of singular values that are smaller than 1, \tomtb{again, see Figs.~\ref{fig:CS_eigenvalues} and \ref{fig:CS_eigenvalues2})}. 
}

\begin{figure}[t]
  \centering
  \begin{subfigure}[b]{0.5\linewidth}
    \centering\includegraphics[width=120pt]{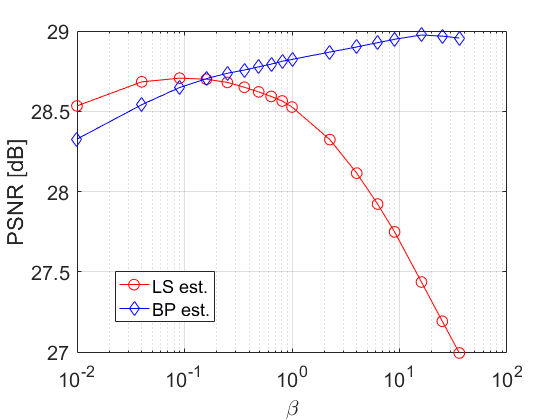}
    \caption{\label{fig:SR_bm3d_noiseless}}
  \end{subfigure}%
  \begin{subfigure}[b]{0.5\linewidth}
    \centering\includegraphics[width=120pt]{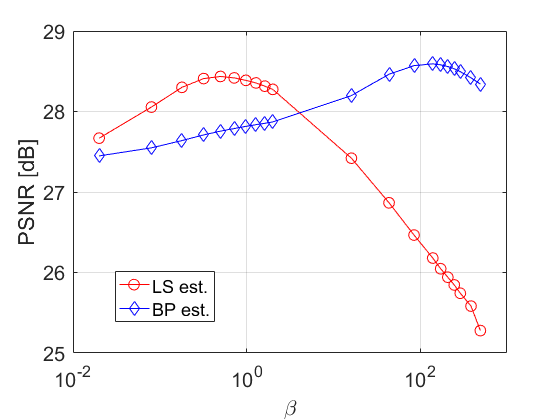}
    \caption{\label{fig:SR_bm3d_noisy}}
  \end{subfigure}
  \caption{Super-resolution with Gaussian filter and scale factor of 3, using BM3D prior and 200 iterations of FISTA. PSNR (averaged over 8 test images) vs. $\beta$ (regularization parameter), for (\subref{fig:SR_bm3d_noiseless}) $\sigma_e=0$, 
and (\subref{fig:SR_bm3d_noisy}) $\sigma_e=\sqrt{2}$.}
\label{fig:SR_bm3d}
%\end{figure}

\vspace{2mm}

%\begin{figure}[t]
  \centering
  \begin{subfigure}[b]{0.5\linewidth}
    \centering\includegraphics[width=120pt]{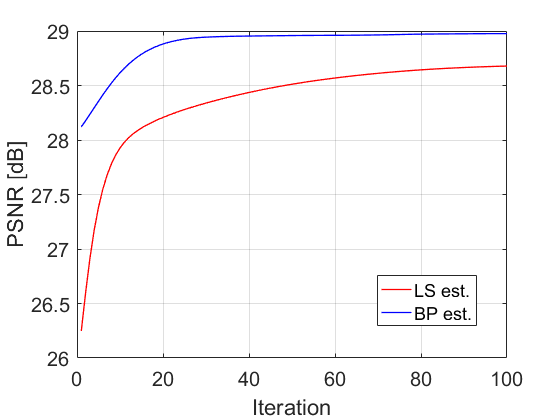}
    \caption{\label{fig:SR_bm3d_noiseless2}}
  \end{subfigure}%
  \begin{subfigure}[b]{0.5\linewidth}
    \centering\includegraphics[width=120pt]{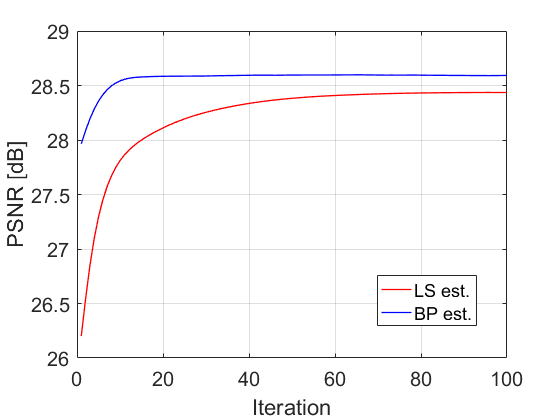}
    \caption{\label{fig:SR_bm3d_noisy2}}
  \end{subfigure}
  \caption{Super-resolution with Gaussian filter and scale factor of 3, using BM3D prior. PSNR (for best uniform setting of $\beta$, averaged over 8 test images) vs. FISTA iteration number, for (\subref{fig:SR_bm3d_noiseless2}) $\sigma_e=0$, and (\subref{fig:SR_bm3d_noisy2}) $\sigma_e=\sqrt{2}$.}
\label{fig:SR_bm3d_psnr_vs_iter}
\end{figure}

\begin{figure}[t]
  \centering
  \begin{subfigure}[b]{0.5\linewidth}
    \centering\includegraphics[width=120pt]{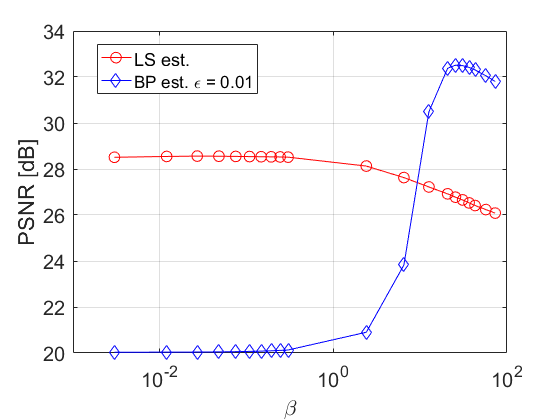}
    \caption{\label{fig:deb_bm3d_noiseless}}
  \end{subfigure}%
  \begin{subfigure}[b]{0.5\linewidth}
    \centering\includegraphics[width=120pt]{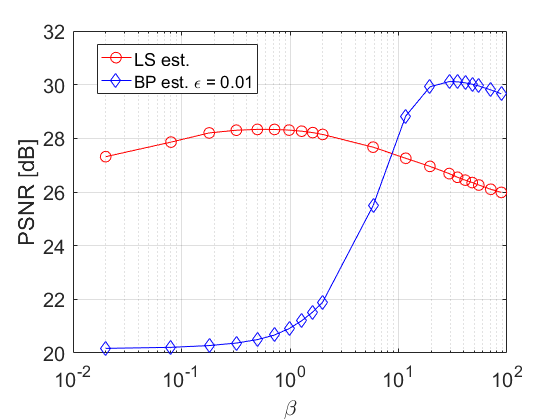}
    \caption{\label{fig:deb_bm3d_noisy}}
  \end{subfigure}
  \caption{Deblurring with uniform $9 \times 9$ blur kernel, using BM3D prior and 200 iterations of FISTA. PSNR (averaged over 8 test images) vs. $\beta$ (regularization parameter), for (\subref{fig:deb_bm3d_noiseless}) $\sigma_e=\sqrt{0.3}$, and (\subref{fig:deb_bm3d_noisy}) $\sigma_e=\sqrt{2}$.}
\label{fig:deb_bm3d}
%\end{figure}

\vspace{5.5mm}

%\begin{figure}[t]
  \centering
  \begin{subfigure}[b]{0.5\linewidth}
    \centering\includegraphics[width=120pt]{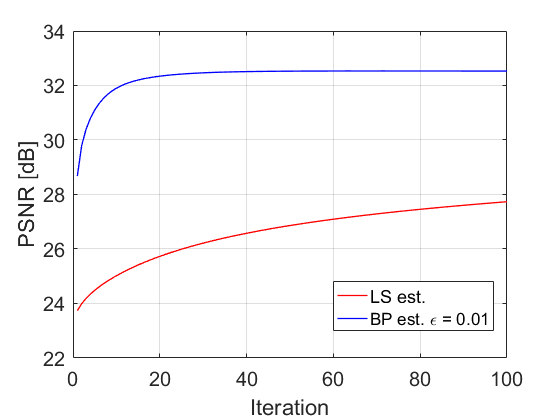}
    \caption{\label{fig:deb_bm3d_noiseless2}}
  \end{subfigure}%
  \begin{subfigure}[b]{0.5\linewidth}
    \centering\includegraphics[width=120pt]{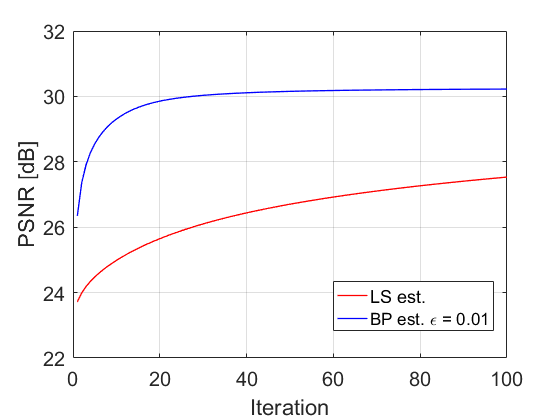}
    \caption{\label{fig:deb_bm3d_noisy2}}
  \end{subfigure}
  \caption{Deblurring with uniform $9 \times 9$ blur kernel, using BM3D prior. PSNR (for best uniform setting of $\beta$, averaged over 8 test images) vs. FISTA iteration number, for (\subref{fig:deb_bm3d_noiseless2}) $\sigma_e=\sqrt{0.3}$, and (\subref{fig:deb_bm3d_noisy2}) $\sigma_e=\sqrt{2}$.}
\label{fig:deb_bm3d_psnr_vs_iter}
\end{figure}

\subsection{BM3D prior}
\label{sec_exp_bm3d}

We turn to compare the performance of the two cost functions for the BM3D prior \cite{dabov2007image}, which is based on 
sparsifying 
a three-dimensional transformation applied to groups of nearest-neighbor (i.e. similar) patches. 
This prior is non-convex. In fact, it is also not clear how to precisely formulate its associated $s(\tilde{\x})$. Yet, when implementing proximal algorithms the proximal mapping of $s(\tilde{\x})$ can be replaced with applying the BM3D denoiser as a ''black-box''.
We use 200 iterations of FISTA to minimize the cost functions 
with typical step sizes as explained above, and the same eight classical test images.

\begin{figure}
%\captionsetup[subfigure]{labelformat=empty}
  \centering
  \begin{subfigure}[b]{0.5\linewidth}
    \centering\includegraphics[width=120pt]{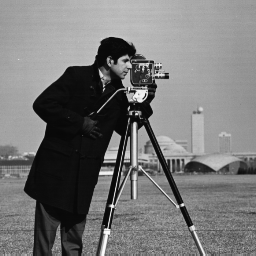}\\
\vspace{1mm}
    \centering\includegraphics[width=58pt]{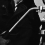}
    \centering\includegraphics[width=58pt]{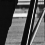}
    \caption{Original image\label{fig:SR_cameraman_x0}}
\vspace{1mm}
  \end{subfigure}%
  \begin{subfigure}[b]{0.5\linewidth}
    \centering\includegraphics[width=120pt]{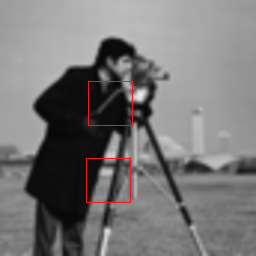}\\
\vspace{1mm}
    \centering\includegraphics[width=58pt]{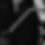}
    \centering\includegraphics[width=58pt]{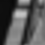}
    \caption{Bicubic (22.83 dB)}
\vspace{1mm}
  \end{subfigure}
  \begin{subfigure}[b]{0.5\linewidth}
    \centering\includegraphics[width=120pt]{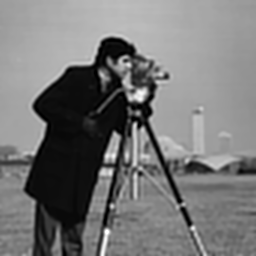}\\
\vspace{1mm}
    \centering\includegraphics[width=58pt]{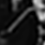}
    \centering\includegraphics[width=58pt]{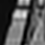}
    \caption{LS-TV (24.37 dB)\label{fig:SR_cameraman_ls_tv}}
\vspace{1mm}
  \end{subfigure}%
  \begin{subfigure}[b]{0.5\linewidth}
    \centering\includegraphics[width=120pt]{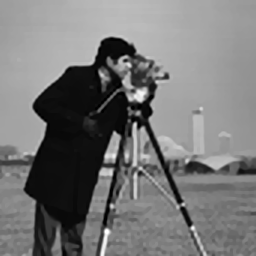}\\
\vspace{1mm}
    \centering\includegraphics[width=58pt]{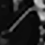}
    \centering\includegraphics[width=58pt]{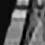}
    \caption{BP-TV (24.94 dB)\label{fig:SR_cameraman_bp_tv}}
\vspace{1mm}
  \end{subfigure}
  \begin{subfigure}[b]{0.5\linewidth}
    \centering\includegraphics[width=120pt]{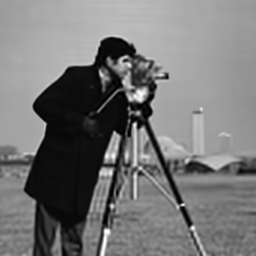}\\
\vspace{1mm}
    \centering\includegraphics[width=58pt]{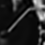}
    \centering\includegraphics[width=58pt]{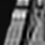}
    \caption{LS-BM3D (25.02 dB)\label{fig:SR_cameraman_ls_bm3d}}
  \end{subfigure}%
  \begin{subfigure}[b]{0.5\linewidth}
    \centering\includegraphics[width=120pt]{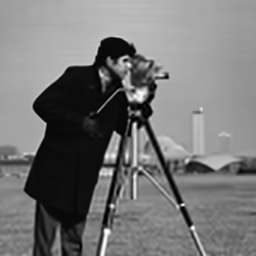}\\
\vspace{1mm}
    \centering\includegraphics[width=58pt]{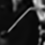}
    \centering\includegraphics[width=58pt]{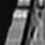}
    \caption{BP-BM3D (25.38 dB)\label{fig:SR_cameraman_bp_bm3d}}
  \end{subfigure}
  \caption{Super-resolution with Gaussian filter, scale factor of 3 and $\sigma_e=0$ for {\em cameraman} image. From left to right and from top to bottom: original image, bicubic upsampling, reconstruction of LS fidelity with TV prior, reconstruction of BP fidelity with TV prior, reconstruction of LS fidelity with BM3D prior, and reconstruction of BP fidelity with BM3D prior.}
\label{fig:SR_noiseless_cameraman}
\end{figure}

\begin{figure}
%\captionsetup[subfigure]{labelformat=empty}
  \centering
  \begin{subfigure}[b]{0.5\linewidth}
    \centering\includegraphics[width=120pt]{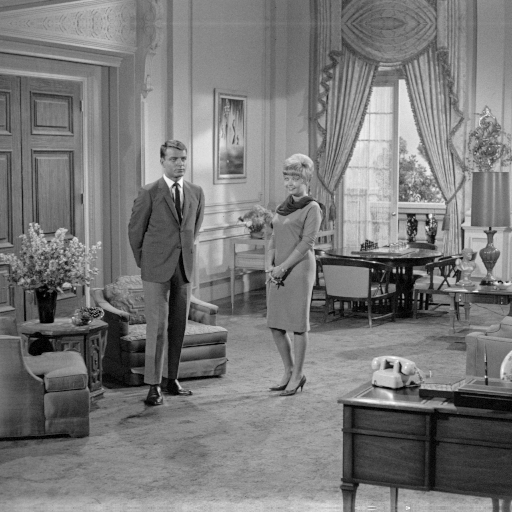}\\
\vspace{1mm}
    \centering\includegraphics[width=58pt]{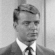}
    \centering\includegraphics[width=58pt]{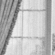}
    \caption{Original image\label{fig:deb_couple_x0}}
\vspace{1mm}
  \end{subfigure}%
  \begin{subfigure}[b]{0.5\linewidth}
    \centering\includegraphics[width=120pt]{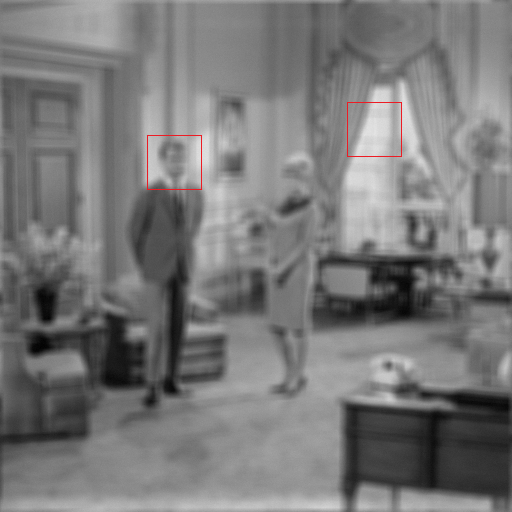}\\
\vspace{1mm}
    \centering\includegraphics[width=58pt]{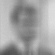}
    \centering\includegraphics[width=58pt]{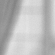}
    \caption{Blurred and noisy image}
\vspace{1mm}
  \end{subfigure}
  \begin{subfigure}[b]{0.5\linewidth}
    \centering\includegraphics[width=120pt]{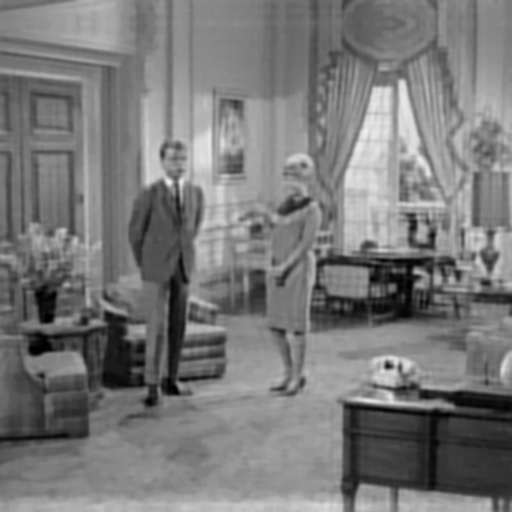}\\
\vspace{1mm}
    \centering\includegraphics[width=58pt]{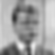}
    \centering\includegraphics[width=58pt]{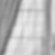}
    \caption{LS-TV (26.45 dB)\label{fig:deb_couple_ls_tv}}
\vspace{1mm}
  \end{subfigure}%
  \begin{subfigure}[b]{0.5\linewidth}
    \centering\includegraphics[width=120pt]{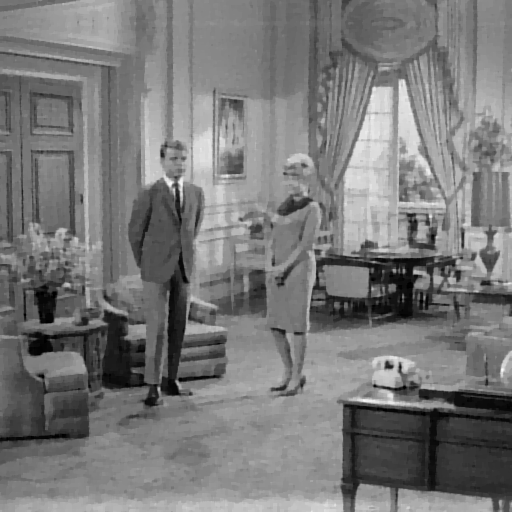}\\
\vspace{1mm}
    \centering\includegraphics[width=58pt]{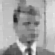}
    \centering\includegraphics[width=58pt]{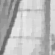}
    \caption{BP-TV (29.28 dB)\label{fig:deb_couple_bp_tv}}
\vspace{1mm}
  \end{subfigure}
  \begin{subfigure}[b]{0.5\linewidth}
    \centering\includegraphics[width=120pt]{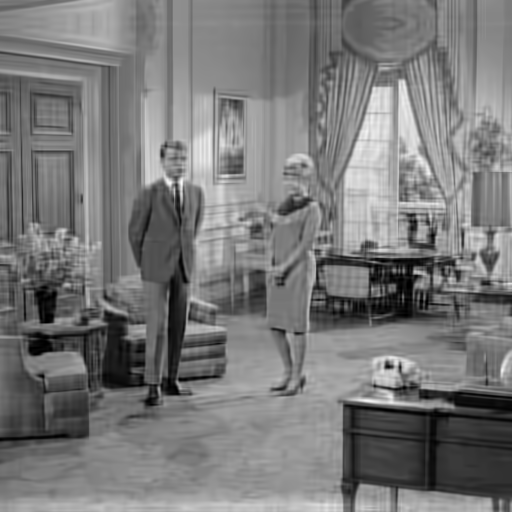}\\
\vspace{1mm}
    \centering\includegraphics[width=58pt]{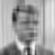}
    \centering\includegraphics[width=58pt]{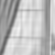}
    \caption{LS-BM3D (28.74 dB)\label{fig:deb_couple_ls_bm3d}}
  \end{subfigure}%
  \begin{subfigure}[b]{0.5\linewidth}
    \centering\includegraphics[width=120pt]{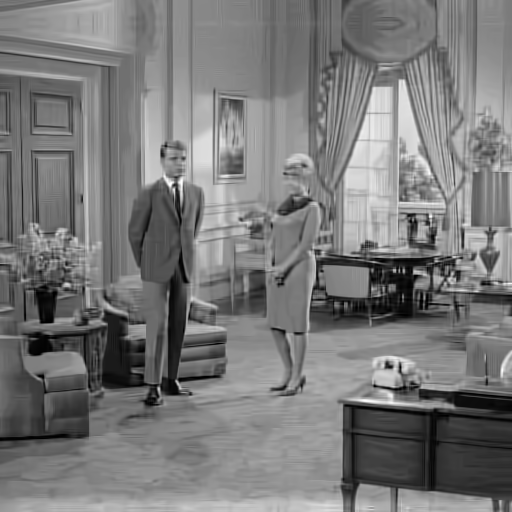}\\
\vspace{1mm}
    \centering\includegraphics[width=58pt]{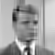}
    \centering\includegraphics[width=58pt]{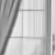}
    \caption{BP-BM3D (30.38 dB)\label{fig:deb_couple_bp_bm3d}}
  \end{subfigure}
  \caption{Deblurring with uniform $9 \times 9$ blur kernel and $\sigma_e=\sqrt{2}$ for {\em couple} image. From left to right and from top to bottom: original image, blurred and noisy image, reconstruction of LS fidelity with TV prior, reconstruction of BP fidelity with TV prior, reconstruction of LS fidelity with BM3D prior, and reconstruction of BP fidelity with BM3D prior.}
\label{fig:deb_noisy_couple}
\end{figure}

\subsubsection{Super-resolution}
\label{sec_exp_bm3d_sr}

We repeat the two SR experiments of Section \ref{sec_exp_tv_sr}.
Fig. \ref{fig:SR_bm3d} shows the average PSNR for different values of $\beta$, and
Fig. \ref{fig:SR_bm3d_psnr_vs_iter} shows the average PSNR as a function of the iteration number, where each estimator uses the best $\beta$ from Fig. \ref{fig:SR_bm3d} (0.09 for LS and 16 for BP in Fig. \ref{fig:SR_bm3d_noiseless2}; 0.5 for LS and 140 for BP in Fig. \ref{fig:SR_bm3d_noisy2}). Again, note that $\hat{\x}_{BP}$ converges much faster than $\hat{\x}_{LS}$. 
In Figs. \ref{fig:SR_cameraman_ls_bm3d} and \ref{fig:SR_cameraman_bp_bm3d} we display the results for {\em cameraman} image in the noiseless case.

Note the strong correlation between the obtained results and the observations from Section \ref{sec_math}, even though the prior is highly non-convex.
In the noiseless case, $\hat{\x}_{BP}$ outperforms $\hat{\x}_{LS}$ for a large range of $\beta$. For very small values of $\beta$ it is inferior to $\hat{\x}_{LS}$, but with only a small gap. 
From a practitioner point of view, the advantages of using the BP cost here are still clear, since when $\beta$ is well-tuned (for each of the cost functions) $\hat{\x}_{BP}$ is significantly better.
Note that in the examined noisy scenario, well-tuned $\hat{\x}_{BP}$ is still better than well-tuned $\hat{\x}_{LS}$, but the gap decreases.

\subsubsection{Deblurring}
\label{sec_exp_bm3d_deb}

We repeat the two deblurring experiments of Section \ref{sec_exp_tv_deb}.
Fig. \ref{fig:deb_bm3d} shows the average PSNR for different values of $\beta$, and Fig. \ref{fig:deb_bm3d_psnr_vs_iter} shows the average PSNR as a function of the iteration number, where each estimator uses the best $\beta$ from Fig. \ref{fig:deb_bm3d} (0.027 for LS and 25.5 for BP in Fig. \ref{fig:deb_bm3d_noiseless2}; 0.5 for LS and 29.5 for BP in Fig. \ref{fig:deb_bm3d_noisy2}).
Figs. \ref{fig:deb_couple_ls_bm3d} and \ref{fig:deb_couple_bp_bm3d} present visual results for {\em couple} image in the case of $\sigma_e=\sqrt{2}$.
The observations that have been made for TV prior stay the same here for the BM3D prior: There exist settings of $(\beta,\epsilon)$ for which $\hat{\x}_{BP}$ significantly outperforms $\hat{\x}_{LS}$ and converges faster.

\subsection{DCGAN prior}

The developments in deep learning \cite{goodfellow2016deep} in the recent years have led to significant improvement in learning generative models. 
Methods like variational auto-encoders (VAEs) \cite{kingma2013auto} and generative adversarial networks (GANs) \cite{goodfellow2014generative} have found success at modeling data distributions. 
This has naturally led to using pre-trained generative models as priors in imaging inverse problems \cite{bora2017compressed}.
Since in popular generative models \cite{kingma2013auto, goodfellow2014generative} a generator $\mathcal{G}(\cdot)$ learns a mapping from a low dimensional space $\z \in \mathbb{R}^d$ to the signal space $\mathcal{G}(\z) \subset \mathbb{R}^n$, one can search for a reconstruction of $\x$ only in the range of the generator.
This can be formulated by the following non-convex prior
\begin{align}
\label{Eq_gan_prior}
s(\tilde{\x})=
\begin{cases} 
      0, & \exists \tilde{\z} \in \Bbb R^d: \tilde{\x}=\mathcal{G}(\tilde{\z}) \\
      +\infty,  & otherwise
   \end{cases}.
\end{align}
Plugging \eqref{Eq_gan_prior} into the typical cost function \eqref{Eq_cost_typical}, we get the objective
\begin{align}
\label{Eq_cost_gan_typical}
f_{LS}(\tilde{\z}) = \| \y-\A \mathcal{G}(\tilde{\z}) \|_2^2.
\end{align}
Note that for this prior, a regularization parameter $\beta$ is not required. The recovery of the latent image $\x$ is given by $\hat{\x}_{LS}=\mathcal{G}(\hat{\z}_{LS})$, where $\hat{\z}_{LS}$ is a minimizer of \eqref{Eq_cost_gan_typical}, which can be obtained by backpropagation and standard gradient based optimizers. 

The technique above has been examined recently in \cite{bora2017compressed}.
Here, we compare it with the one obtained by a similar approach that uses the BP cost function \eqref{Eq_cost_bp}, i.e. we plug \eqref{Eq_gan_prior} into \eqref{Eq_cost_bp}, to get the objective
\begin{align}
\label{Eq_cost_gan_bp}
f_{BP}(\tilde{\z}) = \| \A^\dagger (\y-\A \mathcal{G}(\tilde{\z})) \|_2^2,
\end{align}
and recover $\x$ by $\hat{\x}_{BP}=\mathcal{G}(\hat{\z}_{BP})$, where $\hat{\z}_{BP}$ is a minimizer of \eqref{Eq_cost_gan_bp}.

We use the CelebA dataset \cite{liu2015deep} and Tensorflow package \cite{abadi2016tensorflow} to train a generator using DCGAN architecture \cite{radford2015unsupervised} on the cropped version of the images (64$\times$64 pixels), as done in \cite{bora2017compressed}. We use the first 200,000 images (out of 202,599) for training, and the training procedure follows the one in \cite{radford2015unsupervised, bora2017compressed}. 
At test time, all 
the optimizations with respect to $\z$ are performed using: ADAM \cite{kingma2014adam} with learning rate of 0.1 (as done in \cite{bora2017compressed}), 
same 10 random initializations of $\tilde{\z}$, and 2000 iterations, which suffice for ensuring that the objectives \eqref{Eq_cost_gan_typical} and \eqref{Eq_cost_gan_bp} stop decreasing. The value of $\tilde{\z}$ that gives the lowest objective is chosen.

\subsubsection{Super-resolution}

We compare the performance of $\hat{\x}_{LS}$ and $\hat{\x}_{BP}$ for SR with Gaussian anti-aliasing kernel (defined in Section \ref{sec_math_ver_sr}) and scale factor of 3. 
Table \ref{table:dcgan} shows the PSNR results for the different cost functions, averaged over the last \tomt{50 images} in CelebA (these images are not included in the training data). Several visual results are shown in Fig. \ref{fig:SR_dcgan}.

It can be seen that the BP fidelity yields higher average PSNR and perceptually better recoveries.
In fact, in each of the \tomt{50 examined images} $\hat{\x}_{BP}$ has obtained higher PSNR than $\hat{\x}_{LS}$.
This behavior agrees with the previous experiments that demonstrate the advantages of the BP cost for the noiseless SR problem.
We also note that even though the results of the simple bicubic upsampling are always perceptually worse than the recoveries that use DCGAN, its PSNR is sometimes higher. 
This drawback of GAN-based priors is due to the limited representation capabilities of the generators (sometimes referred to as ''mode collapse'').  A very recent work has suggested to mitigate this deficiency by image-adaptation and back-projections \cite{shady2019image}.

\subsubsection{Compressed sensing}

Due to the small image dimensions, we are able to compare the performance of $\hat{\x}_{BP}$ and $\hat{\x}_{LS}$ for CS with Gaussian measurement matrix (i.e. $A_{ij} \sim \mathcal{N}(0,1/m)$), for which the two cost functions differ (see the discussion in Section \ref{sec_math_ver_cs}).
We use compression ratios of $m/n=0.1$, \tomt{$m/n=0.3$}, and $m/n=0.5$. 
Table \ref{table:dcgan2} shows the PSNR results for the different cost functions, averaged over the last \tomt{50 images} in CelebA. Several visual results are shown in Figs. \ref{fig:CS_dcgan} and \ref{fig:CS_dcgan2}.

\tomt{
The performance gap between $\hat{\x}_{BP}$ and $\hat{\x}_{LS}$ is negligible for $m/n=0.1$, and increases in favor of BP when the $m/n$ ratio increases. 
This behavior 
correlates with the analysis in Section \ref{sec_math} \tomtb{(specifically with \tomtb{Observation~\ref{observ3}})}, which explains such behavior for $\ell_2$ priors by the fact that 
\tomtb{when the $m/n$ ratio increases the condition number of $\A\A^T$ increases as well}.
}

\begin{table}
\small
\renewcommand{\arraystretch}{1.3}
\caption{Reconstruction PSNR [dB] (averaged over 50 images from CelebA) for super-resolution with Gaussian filter and scale factor of 3, using DCGAN prior and ADAM optimizer.} \label{table:dcgan}
\centering
    \begin{tabular}{ | l | l | l | l |}
    \hline
            & Bicubic  & LS est. & BP est. \\ \hline
  SR x3  & 23.04  & 23.02 & 23.77 \\ \hline
    \end{tabular}
\end{table}

\begin{figure}[t]
 \centering
  \subcaptionbox*{Original  \label{fig:x0}}{%
  \includegraphics[width=0.2\columnwidth]{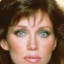}}
  \subcaptionbox*{Bicubic (23.74)  \label{fig:x0}}{%
  \includegraphics[width=0.2\columnwidth]{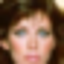}}
  \subcaptionbox*{LS est. (23.96)  \label{fig:x0}}{%
  \includegraphics[width=0.2\columnwidth]{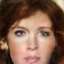}}
  \subcaptionbox*{BP est. (24.47) \label{fig:x0}}{%
  \includegraphics[width=0.2\columnwidth]{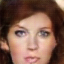}}\\
%\vspace{2mm}
%  \subcaptionbox*{Original  \label{fig:x0}}{%
%  \includegraphics[width=0.2\columnwidth]{figs_new/SR/202581_X0}}
%  \subcaptionbox*{Bicubic (24.84) \label{fig:x0}}{%
%  \includegraphics[width=0.2\columnwidth]{figs_new/SR/202581_X_bicubic}}
%  \subcaptionbox*{LS est. (23.76) \label{fig:x0}}{%
%  \includegraphics[width=0.2\columnwidth]{figs_new/SR/202581_LS_Gz}}
%  \subcaptionbox*{BP est. (24.60) \label{fig:x0}}{%
%  \includegraphics[width=0.2\columnwidth]{figs_new/SR/202581_BP_Gz}}\\
%\vspace{2mm}
  \subcaptionbox*{Original  \label{fig:x0}}{%
  \includegraphics[width=0.2\columnwidth]{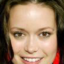}}
  \subcaptionbox*{Bicubic (23.22) \label{fig:x0}}{%
  \includegraphics[width=0.2\columnwidth]{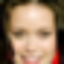}}
  \subcaptionbox*{LS est. (22.88) \label{fig:x0}}{%
  \includegraphics[width=0.2\columnwidth]{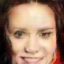}}
  \subcaptionbox*{BP est. (23.23) \label{fig:x0}}{%
  \includegraphics[width=0.2\columnwidth]{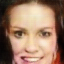}}\\
%\vspace{2mm}
  \subcaptionbox*{Original  \label{fig:x0}}{%
  \includegraphics[width=0.2\columnwidth]{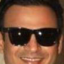}}
  \subcaptionbox*{Bicubic (21.50) \label{fig:x0}}{%
  \includegraphics[width=0.2\columnwidth]{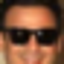}}
  \subcaptionbox*{LS est. (23.15) \label{fig:x0}}{%
  \includegraphics[width=0.2\columnwidth]{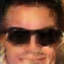}}
  \subcaptionbox*{BP est. (24.11) \label{fig:x0}}{%
  \includegraphics[width=0.2\columnwidth]{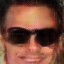}}
%\hspace{2mm}
 \caption{Super-resolution with Gaussian filter and scale factor of 3, using DCGAN prior.
}\label{fig:SR_dcgan}
\end{figure}

\begin{table}
\small
\renewcommand{\arraystretch}{1.3}
\caption{Reconstruction PSNR [dB] (averaged over 50 images from CelebA) for compressed sensing with Gaussian measurement matrix, using DCGAN prior and ADAM optimizer.} \label{table:dcgan2}
\centering
    \begin{tabular}{ | l | l | l | l |}
    \hline
            & Naive $\A^\dagger\y$  & LS est. & BP est. \\ \hline
  CS $m/n=0.1$  & 12.07  & 22.78 & 22.80 \\ \hline
  CS $m/n=0.3$  & 13.22  & 23.55 & 23.62 \\ \hline
  CS $m/n=0.5$  & 14.71  & 23.67 & 23.82 \\ \hline
    \end{tabular}
\end{table}

%
%\vspace{6mm}
%

\begin{figure}
 \centering
  \subcaptionbox*{Original  \label{fig:x0}}{%
  \includegraphics[width=0.2\columnwidth]{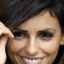}}
  \subcaptionbox*{$\A^\dagger\y$ (11.22)  \label{fig:x0}}{%
  \includegraphics[width=0.2\columnwidth]{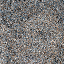}}
  \subcaptionbox*{LS est. (21.15) \label{fig:x0}}{%
  \includegraphics[width=0.2\columnwidth]{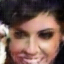}}
  \subcaptionbox*{BP est. (21.00) \label{fig:x0}}{%
  \includegraphics[width=0.2\columnwidth]{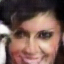}}\\
%\vspace{2mm}
%  \subcaptionbox*{Original  \label{fig:x0}}{%
%  \includegraphics[width=0.2\columnwidth]{figs_new/CS_0p1/202584_X0}}
%  \subcaptionbox*{$\A^\dagger\y$ (15.49) \label{fig:x0}}{%
%  \includegraphics[width=0.2\columnwidth]{figs_new/CS_0p1/202584_X_input}}
%  \subcaptionbox*{LS est. (23.45) \label{fig:x0}}{%
%  \includegraphics[width=0.2\columnwidth]{figs_new/CS_0p1/202584_LS_Gz}}
%  \subcaptionbox*{BP est. (23.32) \label{fig:x0}}{%
%  \includegraphics[width=0.2\columnwidth]{figs_new/CS_0p1/202584_BP_Gz}}\\
%\vspace{2mm}
  \subcaptionbox*{Original  \label{fig:x0}}{%
  \includegraphics[width=0.2\columnwidth]{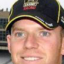}}
  \subcaptionbox*{$\A^\dagger\y$ (11.79) \label{fig:x0}}{%
  \includegraphics[width=0.2\columnwidth]{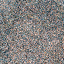}}
  \subcaptionbox*{LS est. (18.51) \label{fig:x0}}{%
  \includegraphics[width=0.2\columnwidth]{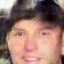}}
  \subcaptionbox*{BP est. (18.36) \label{fig:x0}}{%
  \includegraphics[width=0.2\columnwidth]{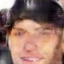}}\\
%\vspace{2mm}
  \subcaptionbox*{Original  \label{fig:x0}}{%
  \includegraphics[width=0.2\columnwidth]{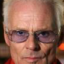}}
  \subcaptionbox*{$\A^\dagger\y$ (13.05) \label{fig:x0}}{%
  \includegraphics[width=0.2\columnwidth]{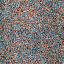}}
  \subcaptionbox*{LS est. (22.11) \label{fig:x0}}{%
  \includegraphics[width=0.2\columnwidth]{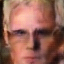}}
  \subcaptionbox*{BP est. (22.10) \label{fig:x0}}{%
  \includegraphics[width=0.2\columnwidth]{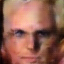}}
%\hspace{2mm}
\caption{Compressed sensing with $m=0.1n$ Gaussian measurements, using DCGAN prior. 
}\label{fig:CS_dcgan}
\end{figure}

%\vspace{6mm}

\begin{figure}
 \centering
  \subcaptionbox*{Original  \label{fig:x0}}{%
  \includegraphics[width=0.2\columnwidth]{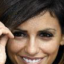}}
  \subcaptionbox*{$\A^\dagger\y$ (11.73)  \label{fig:x0}}{%
  \includegraphics[width=0.2\columnwidth]{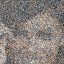}}
  \subcaptionbox*{LS est. (21.89) \label{fig:x0}}{%
  \includegraphics[width=0.2\columnwidth]{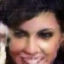}}
  \subcaptionbox*{BP est. (21.80) \label{fig:x0}}{%
  \includegraphics[width=0.2\columnwidth]{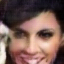}}\\
%\vspace{2mm}
%  \subcaptionbox*{Original  \label{fig:x0}}{%
%  \includegraphics[width=0.2\columnwidth]{figs_new/CS_0p5/202584_X0}}
%  \subcaptionbox*{$\A^\dagger\y$ (16.02) \label{fig:x0}}{%
%  \includegraphics[width=0.2\columnwidth]{figs_new/CS_0p5/202584_X_input}}
%  \subcaptionbox*{LS est. (23.96) \label{fig:x0}}{%
%  \includegraphics[width=0.2\columnwidth]{figs_new/CS_0p5/202584_LS_Gz}}
%  \subcaptionbox*{BP est. (23.77) \label{fig:x0}}{%
%  \includegraphics[width=0.2\columnwidth]{figs_new/CS_0p5/202584_BP_Gz}}\\
%\vspace{2mm}
  \subcaptionbox*{Original  \label{fig:x0}}{%
  \includegraphics[width=0.2\columnwidth]{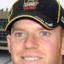}}
  \subcaptionbox*{$\A^\dagger\y$ (12.25) \label{fig:x0}}{%
  \includegraphics[width=0.2\columnwidth]{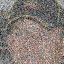}}
  \subcaptionbox*{LS est. (18.77) \label{fig:x0}}{%
  \includegraphics[width=0.2\columnwidth]{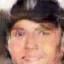}}
  \subcaptionbox*{BP est. (19.00) \label{fig:x0}}{%
  \includegraphics[width=0.2\columnwidth]{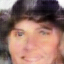}}\\
%\vspace{2mm}
  \subcaptionbox*{Original  \label{fig:x0}}{%
  \includegraphics[width=0.2\columnwidth]{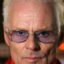}}
  \subcaptionbox*{$\A^\dagger\y$ (13.59) \label{fig:x0}}{%
  \includegraphics[width=0.2\columnwidth]{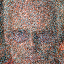}}
  \subcaptionbox*{LS est. (22.85) \label{fig:x0}}{%
  \includegraphics[width=0.2\columnwidth]{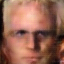}}
  \subcaptionbox*{BP est. (22.97) \label{fig:x0}}{%
  \includegraphics[width=0.2\columnwidth]{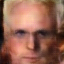}}
%\hspace{2mm}
\caption{Compressed sensing with $m=0.5n$ Gaussian measurements, using DCGAN prior. 
}\label{fig:CS_dcgan2}
\end{figure}

\section{Conclusion}
\label{sec_conclusion}

In this work we examined the BP fidelity term for ill-posed linear inverse problems. This term has only been used implicitly by the recently proposed iterative denoising and backward projections (IDBP) framework, and is an alternative to the least squares (LS) term, which is the common choice in most works.
We showed that IDBP is essentially a specific optimization scheme, namely the proximal gradient method (known also as ISTA), for minimizing the cost function induced by the BP fidelity term. 
%Then, we
We 
analytically compared the two fidelity terms---BP and LS---for the case of $\ell_2$-type prior functions, and obtained mathematically-backed observations in favor of the BP term 
when the condition number of $\A\A^T$ is large (which is the case
 in many applications, such as super-resolution and deblurring).
\tomt{Furthermore, we showed that it is possible to leverage prior knowledge on $\x$ to increase the coverage of the observations.} 
Finally, we empirically demonstrated that the behavior for sophisticated priors, such as TV, BM3D and DCGAN, strongly correlates with 
the theoretically backed properties that we established for $\ell_2$ priors.
While the mathematical performance analysis in this work is done only for $\ell_2$ priors, it provides a good characterization for the advantages of BP and LS compared to each other. Yet, we believe that there are other factors that should be explored with respect to the new fidelity term, such as its behavior with non-convex priors or its effect on the convergence speed of iterative optimization algorithms.

\appendices

\section{The Connection Between IDBP \cite{tirer2019image} and $f_{BP}(\tilde{\x})$}
\label{sec_idbp_int}

\subsection{Background}
\label{sec_idbp_background}

The iterative denoising and backward projections (IDBP) framework \cite{tirer2019image} is inspired by the plug-and-play priors concept \cite{venkatakrishnan2013plug}, which encourages the usage of existing Gaussian denoisers as ''black boxes'' to implicitly dictate the prior $s(\tilde{\x})$ when solving inverse problems. 
Such an approach allows one to use sophisticated denoising methods even when it is not clear how to formulate their associated priors, e.g. convolutional neural network (CNN) denoisers.

Several plug-and-play works have been published \cite{venkatakrishnan2013plug, sreehari2016plug, romano2017little,  teodoro2016image, kamilov2017plug, chan2017plug, 
 zhang2017learning, sun2019online, ono2017primal}. Most of them 
consider the typical cost function \eqref{Eq_cost_typical}
and directly minimize it using 
existing iterative optimization schemes, such as FISTA \cite{beck2009fast}, ADMM \cite{boyd2011distributed} or quadratic penalty method \cite{nocedal2006sequential}, that include steps in which the proximal mapping of $s(\tilde{\x})$ is used (as explained below, this mapping is equivalent to Gaussian denoising under the prior $s(\tilde{\x})$).

Recently, \cite{tirer2019image} has suggested, after several 
manipulations, to solve a different optimization problem
\begin{align}
\label{Eq_cost_func_our_old}
\minim{\tilde{\x}, \tilde{\z}} \,\,\, \frac{1}{2(\sigma_e+\delta)^2} \| \tilde{\z}-\tilde{\x} \|_2^2 + s(\tilde{\x}) \,\,\,\, \textrm{s.t.} \,\,\,\, \A\tilde{\z}= \y,
\end{align}
where $\sigma_e$ is the noise level and $\delta$ is a design parameter. This work has also proposed an adaptive strategy to set $\delta$, which does not depend on the prior and, contrary to cross-validation, does not require a set of ground truth examples. 
It has been suggested in \cite{tirer2019image} to solve \eqref{Eq_cost_func_our_old} using 
a simple alternating minimization scheme that possesses the plug-and-play property, where the prior term $s(\tilde{\x})$ is handled solely by a Gaussian denoising operation $\mathcal{D}(\cdot;\sigma)$ with noise level $\sigma=\sigma_e+\delta$. 
In this iterative method, 
$\tilde{\z}_k$ is obtained by projecting $\tilde{\x}_{k-1}$ onto $\{ \A \Bbb R^n = \y \}$
\begin{align}
\label{Eq_cost_func_our_z_old}
\tilde{\z}_k &= \argmin{\tilde{\z}} \,\, \| \tilde{\z}-\tilde{\x}_{k-1} \|_2^2  \,\,\,\, \textrm{s.t.} \,\,\,\, \A\tilde{\z}= \y \nonumber  \\
&= \A^\dagger\y + (\I_n - \A^\dagger\A)\tilde{\x}_{k-1} \nonumber  \\
&= \tilde{\x}_{k-1} + \A^\dagger (\y - \A \tilde{\x}_{k-1}).
\end{align}
and $\tilde{\x}_k$ is obtained by
\begin{align}
\label{Eq_cost_func_our_x_old}
\tilde{\x}_k &= \argmin{\tilde{\x}} \,\, \frac{1}{2(\sigma_e+\delta)^2} \| \tilde{\z}_{k}-\tilde{\x} \|_2^2 + s(\tilde{\x}) \nonumber \\
& \triangleq \mathcal{D}(\tilde{\z}_{k};\sigma_e+\delta).
\end{align}
The two repeating operations lends the method its name: Iterative Denoising and Backward Projections (IDBP). After a stopping criterion is met, the last $\tilde{\x}_k$ is taken as the estimate of the latent $\x$.
Note that in many cases the operation $\A^\dagger$ can be performed efficiently (e.g. the matrix inversion can be avoided using the conjugate gradient method \cite{hestenes1952methods}), and thus IDBP is dominated by the complexity of the denoising operation, similarly to other plug-and-play techniques.
Using sophisticated denoisers, such as BM3D and CNNs, this algorithm has achieved excellent results for deblurring \cite{tirer2019image, tirer2018icip} and super-resolution \cite{tirer2018super}.

\subsection{Obtaining IDBP by applying ISTA on $f_{BP}(\tilde{\x})$} %the BP cost function}
\label{sec_idbp_ista}

Interestingly, there is another way to develop the exact algorithm, which is different from the way it is developed in \cite{tirer2019image}.
First, note that \eqref{Eq_cost_func_our_old} can be solved directly for $\tilde{\z}$. Similar to \eqref{Eq_cost_func_our_z_old}, we get
\begin{align}
\label{Eq_ztilde_full_min}
\tilde{\z}^* = \A^\dagger \y + (\I_n - \A^\dagger \A ) \tilde{\x}.
\end{align}
Substituting (\ref{Eq_ztilde_full_min}) into (\ref{Eq_cost_func_our_old}), 
we reach $\minim{\tilde{\x}}f_{BP}(\tilde{\x})$ 
with a specific value of the regularization parameter, i.e. $\beta = (\sigma_e+\delta)^2$.
Therefore, IDBP is essentially a specific method to minimize the $f_{BP}(\tilde{\x})$ cost function.
Let us show that this method coincides with applying the proximal gradient method \cite{beck2009fast, combettes2011proximal}, popularized under the name ISTA\footnote{ISTA is the abbreviation of Iterative Shrinkage-Thresholding Algorithm, initially designed for $s(\tilde{\x})=\|\tilde{\x}\|_1$  \cite{daubechies2004iterative}.}, on $f_{BP}(\tilde{\x})$. Let us define the proximal mapping, which was introduced by Moreau \cite{moreau1965proximite} for convex functions. 
\tomt{Here we do not limit this definition to convex functions, though, we emphasize that previous results for proximal mapping of convex functions do not apply to non-convex functions.}
\begin{definition}
\label{def1}
The proximal mapping of a function $s(\cdot)$ at the point $\tilde{\z}$ is defined by
\begin{align}
\label{def_prox}
\mathrm{prox}_{s(\cdot)}(\tilde{\z}) \triangleq \argmin{\tilde{\x}} \,\, \frac{1}{2} \| \tilde{\z} - \tilde{\x} \|_2^2 + s(\tilde{\x}).
\end{align}
\end{definition}
Clearly, given the same $s(\cdot)$, Gaussian denoising and proximal mapping are tightly connected $\mathcal{D}(\tilde{\z};\sigma) = \mathrm{prox}_{\sigma^2 s(\cdot)}(\tilde{\z})$.

Assuming a differentiable fidelity term $\ell(\tilde{\x})$ with a Lipschitz continuous gradient $\nabla \ell(\tilde{\x})$, applying ISTA on \eqref{Eq_cost_func_general} involves iterations of
\begin{align}
\label{Eq_ista}
\tilde{\x}_k =  \mathrm{prox}_{\mu \beta s(\cdot)}(\tilde{\x}_{k-1} - \mu \nabla  \ell(\tilde{\x}_{k-1})),
\end{align}
where $\mu$ is a step-size, 
which ensures convergence for convex $s(\cdot)$ if it is equal to (or smaller than) 1 over the Lipschitz constant of $\nabla  \ell(\tilde{\x})$  \cite{beck2009fast}.

\begin{proposition}
\label{proposition_ista_idbp}
The IDBP algorithm, given in \eqref{Eq_cost_func_our_z_old} and \eqref{Eq_cost_func_our_x_old}, coincides with applying ISTA  \eqref{Eq_ista} on the cost function $f_{BP}(\tilde{\x})$.
\end{proposition}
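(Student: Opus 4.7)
The plan is to show that one step of ISTA applied to $f_{BP}(\tilde{\x})$ with step-size $\mu=1$ produces exactly the two updates \eqref{Eq_cost_func_our_z_old}–\eqref{Eq_cost_func_our_x_old} of IDBP, when the regularization parameter is identified as $\beta=(\sigma_e+\delta)^2$ (a correspondence the excerpt has already established via the elimination of $\tilde{\z}$ in \eqref{Eq_ztilde_full_min}).

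First I would compute $\nabla \ell_{BP}(\tilde{\x})$. Writing $\ell_{BP}(\tilde{\x}) = \tfrac{1}{2}\|\A^\dagger \y - \P_A \tilde{\x}\|_2^2$, a direct differentiation together with the identities $\P_A=\P_A^T=\P_A^2$ and $\P_A \A^\dagger = \A^\dagger$ (already used in \eqref{Eq_cost_idbp_est}) gives
\begin{align*}
\nabla \ell_{BP}(\tilde{\x}) = \P_A\tilde{\x} - \A^\dagger \y = -\A^\dagger(\y - \A\tilde{\x}).
\end{align*}
Next I would determine the Lipschitz constant of this gradient: since the constant Hessian equals $\P_A$, an orthogonal projection, $\|\P_A\|=1$ and so the canonical step-size in ISTA is $\mu = 1$.

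With $\mu=1$ and $\beta=(\sigma_e+\delta)^2$, the ISTA recursion \eqref{Eq_ista} becomes
\begin{align*}
\tilde{\x}_k &= \mathrm{prox}_{(\sigma_e+\delta)^2 s(\cdot)}\bigl(\tilde{\x}_{k-1} + \A^\dagger(\y-\A\tilde{\x}_{k-1})\bigr).
\end{align*}
Comparing with IDBP, the inner argument is precisely $\tilde{\z}_k$ as defined in \eqref{Eq_cost_func_our_z_old}, and by the identity $\mathcal{D}(\tilde{\z};\sigma)=\mathrm{prox}_{\sigma^2 s(\cdot)}(\tilde{\z})$ noted immediately after Definition~\ref{def1}, the outer operation coincides with the denoising step \eqref{Eq_cost_func_our_x_old}. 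Thus the two recursions are literally the same map, and the claim follows.

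The main obstacle is a presentational rather than a mathematical one: I need to be careful that the $\A^\dagger \y$ appearing in $\ell_{BP}$ collapses correctly under $\P_A$ (using $\P_A \A^\dagger = \A^\dagger$) so that the gradient simplifies to $-\A^\dagger(\y-\A\tilde{\x})$, and to flag explicitly that the step-size $\mu = 1/\|\P_A\| = 1$ is exactly what matches the IDBP iteration without any rescaling of $\beta$. No convexity of $s(\cdot)$ is needed to write down the recursion, so the proposition holds at the level of identifying the two iterative maps; convergence guarantees are a separate matter not claimed here.
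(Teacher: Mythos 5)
Your proposal is correct and follows essentially the same route as the paper's proof: compute $\nabla \ell_{BP}(\tilde{\x}) = -\A^\dagger(\y-\A\tilde{\x})$ via the projection identities, take $\mu = 1/\|\P_A\| = 1$, and identify the prox with the denoiser $\mathcal{D}(\cdot;\sigma_e+\delta)$ under $\beta=(\sigma_e+\delta)^2$ to recover the IDBP updates \eqref{Eq_cost_func_our_z_old}--\eqref{Eq_cost_func_our_x_old}. No gaps; the remark that convexity of $s(\cdot)$ is not needed for the identification of the maps matches the paper's own caveat about convergence guarantees.
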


\begin{proof}

Let us compute $\nabla  \ell_{BP}(\tilde{\x})$. Using the properties $\P_A \triangleq \A^\dagger \A = \P_A^T=\P_A^2$ and $\P_A \A^\dagger = \A^\dagger$, we get
\begin{align}
\label{Eq_fidelity_idbp_grad}
\nabla \ell_{BP}(\tilde{\x}) &= - \P_A ( \A^\dagger \y - \P_A \tilde{\x} ) \nonumber \\
& = - \A^\dagger ( \y - \A \tilde{\x} ).
\end{align}
The Lipschitz constant of $\nabla  \ell_{BP}(\tilde{\x})$ can be computed here as the spectral norm of the constant
Hessian matrix $\nabla^2 \ell_{BP}$. Therefore, $\mu$ can be chosen as
\begin{align}
\label{Eq_ista_mu_idbp}
\mu =  \frac{1}{\| \nabla^2 \ell_{BP}(\tilde{\x}) \|} = \frac{1}{\| \P_A \|} = 1,
\end{align}
where we use the fact that the spectral norm of a non-trivial orthogonal projection is 1.
Now, due to the connection $\mathcal{D}(\tilde{\z};\sigma) = \mathrm{prox}_{\sigma^2 s(\cdot)}(\tilde{\z})$, \eqref{Eq_ista} can be written as 
\begin{align}
\label{Eq_ista_denoiser}
\tilde{\x}_k =  \mathcal{D}(\tilde{\x}_{k-1} - \mu \nabla  \ell(\tilde{\x}_{k-1}); \sqrt{\mu \beta}).
\end{align}
Finally, by plugging \eqref{Eq_fidelity_idbp_grad} and \eqref{Eq_ista_mu_idbp} into \eqref{Eq_ista_denoiser} and setting $\beta = (\sigma_e+\delta)^2$, we get the IDBP scheme, which is presented in \eqref{Eq_cost_func_our_z_old} and \eqref{Eq_cost_func_our_x_old}.

\end{proof}

The connection between IDBP and ISTA, allows IDBP to adopt the theoretical results of the latter. 
Yet, note that the powerful global convergence (obtaining the optimal value of the objective) of ISTA holds only for denoisers that are associated with convex prior functions \cite{beck2009fast}. This limitation is shared also with ADMM-based plug-and-play schemes \cite{sreehari2016plug}.

\bibliographystyle{ieeetr}
%\bibliographystyle{elsarticle-num}

%\subsubsection*{References}

\bibliography{paper_ver_for_arXiv}

\end{document}